\documentclass[]{bytedance_seed}
\usepackage{fix-cm}

\usepackage[toc,page,header]{appendix}

\usepackage{minitoc}

\usepackage{amsmath,amsfonts,bm}

\def\eqref#1{equation~\ref{#1}}

\def\1{\bm{1}}

\DeclareMathAlphabet{\mathsfit}{\encodingdefault}{\sfdefault}{m}{sl}
\SetMathAlphabet{\mathsfit}{bold}{\encodingdefault}{\sfdefault}{bx}{n}

\usepackage{url}
\usepackage{microtype}
\usepackage{siunitx}  
\usepackage{amsmath,amssymb,mathtools}
\usepackage{graphicx}
\usepackage{booktabs,multirow}
\usepackage{algorithm}
\usepackage{caption}
\usepackage{wrapfig}
\usepackage{subcaption}
\usepackage{algpseudocode}
\usepackage{hyperref}
\usepackage{amsthm}
\usepackage[dvipsnames]{xcolor}
\usepackage{enumitem}
\usepackage{tikz}
\usepackage{natbib}
\definecolor{myblue}{rgb}{0,0.5,1.0}
\definecolor{myred}{rgb}{0.796,0.255,0.329}
\newtheorem{theorem}{Theorem}

\title{In-Place Test-Time Training}

\author[1,2,*,\star]{Guhao Feng}
\author[1,\star]{Shengjie Luo}
\author[1]{Kai Hua}
\author[1]{Ge Zhang}
\author[2, \dagger]{Di He}
\author[1, \dagger]{Wenhao Huang}
\author[1]{Tianle Cai}

\renewcommand{\authorlist}{%
\authorfont\sffamily
Guhao Feng$^{1,2,\star}$, Shengjie Luo$^{1,\star}$, Kai Hua$^{1}$, Ge Zhang$^{1}$,\\[2pt]
Di He$^{2,\dagger}$, Wenhao Huang$^{1,\dagger}$, Tianle Cai$^{1}$%
}

\affiliation[1]{ByteDance Seed}
\affiliation[2]{Peking University}

\contribution[\star]{Equal Contribution}
\contribution[\dagger]{Corresponding authors}

\abstract{
The static ``train then deploy" paradigm fundamentally limits Large Language Models (LLMs) from dynamically adapting their weights in response to continuous streams of new information inherent in real-world tasks. Test-Time Training (TTT) offers a compelling alternative by updating a subset of model parameters (fast weights) at inference time, yet its potential in the current LLM ecosystem is hindered by critical barriers including architectural incompatibility, computational inefficiency and misaligned fast weight objectives for language modeling. In this work, we introduce \textit{In-Place Test-Time Training (In-Place TTT)}, a framework that seamlessly endows LLMs with Test-Time Training ability. In-Place TTT treats the final projection matrix of the ubiquitous MLP blocks as its adaptable fast weights, enabling a ``drop-in" enhancement for LLMs without costly retraining from scratch. Furthermore, we replace TTT's generic reconstruction objective with a tailored, theoretically-grounded objective explicitly aligned with the Next-Token-Prediction task governing autoregressive language modeling. This principled objective, combined with an efficient chunk-wise update mechanism, results in a highly scalable algorithm compatible with context parallelism. Extensive experiments validate our framework's effectiveness: as an in-place enhancement, it enables a 4B-parameter model to achieve superior performance on tasks with contexts up to 128k, and when pretrained from scratch, it consistently outperforms competitive TTT-related approaches. Ablation study results further provide deeper insights on our design choices. Collectively, our results establish In-Place TTT as a promising step towards a paradigm of continual learning in LLMs.
}

\date{\today}
\correspondence{Di He at \email{dihe@pku.edu.cn} and Wenhao Huang at \email{huang.wenhao@bytedance.com}}
\checkdata[Code]{\url{https://github.com/ByteDance-Seed/In-Place-TTT}}

\begin{document}
\maketitle


\section{Introduction}

\label{sec:intro}

Large Language Models (LLMs) have demonstrated remarkable capabilities across a range of complex tasks~\citep{brown2020language,chowdhery2022palm,touvron2023llama,openai2024gpt4}. This success is largely built on a static ``train then deploy" paradigm, where a model first acquires knowledge from massive corpora and then kept fixed during inference. Yet this design imposes a fundamental limitation: once deployed, the model’s weights cannot be updated, preventing dynamic adaptation to the specific context provided by streaming input tokens. Consequently, at test time, the model is constrained in its ability to process and reason over long-horizon, evolving tasks~\citep{chan2024mle,starace2025paperbench}, and to continuously learn from unbounded streams of experience like humans~\citep{silver2025welcome}.

In-context learning~\citep{brown2020language,wei2023chainofthoughtpromptingelicitsreasoning} offers a way to mitigate this problem via maintaining all past input tokens in the context. However, its effectiveness is tethered to the model's context window, restricted by the quadratic complexity of the de facto attention mechanism~\citep{vaswani2017attention}. This bottleneck has spurred a line of research into architectural solutions aimed at efficiently extending the context window~\citep{beltagy2020longformer,peng2023yarn,child2019generating,dao2023mamba}. Differently, \textit{Test-Time Training (TTT)} has emerged as a new paradigm~\citep{sun2020ttt,wang2021tent,sun2024tttLayers,behrouz2024titans,yau2025sequentialparalleldualityprefixscannable}. Instead of merely making a static model more efficient, TTT enables the model to dynamically update the parameters and adapt to any specific context, directly targeting the aforementioned limitation. Specifically, TTT introduces a small subset of model parameters, called fast weights~\citep{schlag2021linear}, which can be updated on the fly for each new input. By minimizing a self-supervised reconstruction objective, these fast weights compress and internalize contextual information, functioning as an expressive, online evolving state. 

Despite its conceptual appeal, unleashing TTT's potential within the current LLM ecosystem is hindered by critical barriers: \textcolor{myblue}{(\romannumeral1)} Existing TTT methods often rely on specialized layers beyond standard Transformer blocks, which usually demand costly pretraining from scratch to achieve satisfactory performance.~\citep{sun2020ttt,wang2021tent,zhang2025tttdr,sun2024tttLayers}; \textcolor{myblue}{(\romannumeral2)} the canonical TTT mechanism is inherently sequential~\citep{sun2020ttt,sun2024tttLayers}. While existing works explore chunk-wise acceleration~\citep{sun2023retentive,behrouz2024titans,irie2025fastweightprogramminglinear,yau2025sequentialparalleldualityprefixscannable}, TTT's role as the primary token mixer forces a reliance on small chunks to maintain performance, thereby bottlenecking the massive parallelism required to saturate modern accelerators; and \textcolor{myblue}{(\romannumeral3)} the prevalent use of a generic reconstruction objective for TTT's fast weights updating is not explicitly tailored for the causal, Next-Token Prediction task that governs autoregressive LMs, potentially hindering their ultimate performance.

To bridge this gap, we introduce \textit{In-Place Test-Time Training (In-Place TTT)}, a framework designed to seamlessly endow LLMs with Test-Time Training capabilities by directly addressing the aforementioned barriers. Our core insight is to repurpose existing MLP blocks with an in-place design rather than introducing a new, specialized layer \textcolor{myblue}{(tackling barrier \romannumeral1)}. Specifically, In-Place TTT treats the final projection matrix of MLP blocks as their fast weights, updating it in-place during inference. This ``drop-in" design requires no modifications to the model's architecture, preserving the integrity of pre-trained weights and enabling on-the-fly adaptation without costly retraining from scratch.

To tackle the computational inefficiency and objective misalignment, we further design a bespoke adaptation mechanism for language modeling. Following previous works~\citep{sun2023retentive,behrouz2024titans,irie2025fastweightprogramminglinear,zhang2025tttdr,yau2025sequentialparalleldualityprefixscannable}, we replace the inefficient per-token updates with a scalable chunk-wise update rule \textcolor{myblue}{(tackling barrier \romannumeral2)}. Furthermore, our in-place design operates complementarily to the attention mechanism. This synergy obviates the need for small chunks required by standalone TTT layers, thereby ensuring high throughput on modern accelerators. Concurrently, we move beyond the generic reconstruction targets of prior work~\citep{sun2024tttLayers,zhang2025tttdr} and introduce a novel objective explicitly aligned with the Next-Token Prediction (NTP) goal \textcolor{myblue}{(tackling barrier \romannumeral3)}. Grounded in a rigorous theoretical analysis, we show this NTP-aligned objective encourages the fast weights to store predictively useful information for autoregressive language modeling, leading to a highly effective and scalable algorithm.

Grounded in these principled design choices, our In-Place TTT provides a practical and effective framework for enhancing LLMs with dynamic, continual adaptation. We conduct extensive experiments on language modeling tasks of various compute scales, using them as a practical proxy to probe the model's potential on long-horizon, evolving tasks. Through relatively cheap continual training, our In-Place TTT enables Qwen3-4B-Base to achieve superior performance on tasks with contexts up to 128k. Furthermore, we compare In-Place TTT with competitive TTT-related methods by conducting pretraining from scratch on up to 32k-length corpora, validating the architectural merit of our framework. Finally, ablation studies on state size, chunk size, and fast weight objectives provide deeper insights, confirming the critical role of each design choice. Collectively, our results establish In-Place TTT as a promising step towards a paradigm of continual learning in LLMs.

\section{Preliminary: Test-Time Training}
\label{sec:preliminary}

This section introduces \emph{Test-Time Training (TTT)}, a paradigm that enables models to adapt dynamically to new data at inference time~\citep{sun2020ttt, sun2024tttLayers, zhang2025tttdr}. We will first elaborate on the TTT mechanism and then discuss the key desiderata for successfully applying TTT to LLMs, which directly motivates our framework.

\textbf{The TTT mechanism.} At its core, the TTT mechanism leverages \emph{fast weights} \citep{ba2016fastweights,schlag2021linear}, denoted by $\mathbf{W}$. These weights constitute a small neural network $f_\mathbf{W}(\cdot):\mathbb{R}^d\to\mathbb{R}^d$, which is rapidly updated at test time. Unlike standard model weights that are frozen after training, the fast weights $\mathbf{W}$ act as a dynamic memory, continuously storing and retrieving contextual information from the sequence. 

To process an input sequence $\mathbf{x} = [x_1, x_2, \dots, x_N]$, each token $x_i\in\mathbb{R}^d$ is typically projected to derive the necessary inputs for the TTT operations, such as a \emph{query} ($q_i$), a \emph{key} ($k_i$), and a \emph{value} ($v_i$). The TTT mechanism then operates through two core, sequential operations:

\begin{enumerate}
    \item \textbf{Update Operation:} The fast weights $\mathbf{W}$ are updated to associate a key $k_i$ with its corresponding value $v_i$. This is framed as a single optimization step that minimizes a loss function $\mathcal{L}(\cdot, \cdot)$ (e.g., Mean Squared Error), which measures the discrepancy in this association. Intuitively, this step encodes the information from the $(k_i,v_i)$ pair into the neural memory $f_\mathbf{W}$. Given a learning rate $\eta$, the update rule is:

    \begin{equation*}
        \mathbf{W}_i \leftarrow \mathbf{W}_{i-1} - \eta \nabla_{\mathbf{W}} \mathcal{L}\big(f_{\mathbf{W}_{i-1}}(k_i), v_i\big).
    \end{equation*}
    \item \textbf{Apply Operation:} The newly updated network $f_\mathbf{W}$, now parameterized by $\mathbf{W}_i$, is used to process a query $q_i$, i.e., $o_i = f_{\mathbf{W}_i}(q_i)$. This output $o_i$ is enriched with the contextual information from preceding key-value pairs, as that information is now encoded in $\mathbf{W}_i$.
\end{enumerate}

While this two-step formulation describes the high-level mechanism of TTT, the specific implementation details can vary significantly. Indeed, numerous recent studies have investigated a rich design space, exploring different loss functions, more sophisticated optimizers, and alternative neural memory parameterizations to improve performance and efficiency~\citep{wang2025test, behrouz2024titans, behrouz2025s, karami2025lattice}. These design choices critically influence how effectively the fast weights can store, retrieve, and forget sequential information, positioning the TTT mechanism for different data modalities and tasks.

\textbf{Desiderata for TTT within the LLM ecosystem.}
Despite its promise as a paradigm for dynamic adaptation, unleashing TTT's potential within the LLM ecosystem requires addressing several critical challenges. For TTT to be a viable and effective component, it must satisfy the following desiderata:
\begin{itemize}
    \item \textbf{Architectural Compatibility}. We call an architecture compatible with LLM if it can warm start from a pretrained checkpoint.
    However, current TTT mechanisms are often developed as standalone recurrent layers designed to replace attention, rather than complement it ~\citep{sun2020ttt,wang2021tent,zhang2025tttdr,sun2024tttLayers,hu2025ttl}. This necessitates costly pretraining from scratch, creating a significant barrier to adoption for the massive, billion-parameter models that dominate the LLM ecosystem. Therefore, a key desideratum is a ``drop-in" design that requires no fundamental architectural modifications.
    
    \item \textbf{Computational Efficiency}. The mechanism must be efficient on modern parallel accelerators. The canonical per-token update rule of TTT is inherently sequential and, as a result, severely bottlenecks the parallel processing capabilities of GPUs and TPUs~\citep{ sun2020ttt,sun2024tttLayers,behrouz2024titans}. This operational inefficiency makes fine-grained updates impractical for high-throughput language modeling. Consequently, an efficient TTT implementation must move beyond per-token schemes and ensure scalability, for instance by adopting chunk-wise update mechanisms~\citep{li2025tnt,sun2023retentive,behrouz2024titans,irie2025fastweightprogramminglinear}.
    
    \item \textbf{Tailored Learning Objective for Language Modeling}. The predominant self-supervised objective in TTT is reconstruction, where the model learns to associate ($k_i$,$v_i$) pairs, and $v_i$ is typically derived from the input token $x_i$ itself \citep{sun2020ttt, sun2024tttLayers, zhang2025tttdr,wang2021tent,hu2025ttl}. While this generic objective enables the TTT mechanism to store information, its direct relevance to the ultimate goal of language modeling—predicting the next token—is not guaranteed. The choice of the target value $v$ remains a critical, yet underexplored, design decision that may be suboptimal for capturing the complex causal dependencies required for LLMs.
\end{itemize}

\section{In-Place Test-Time Training}
\label{sec:method-inplace-ttt}

To satisfy the desiderata outlined in \cref{sec:preliminary}, we introduce \textit{\textbf{In-Place} \textbf{T}est-\textbf{T}ime \textbf{T}raining \textbf{(In-Place TTT)}}, a framework designed to unlock TTT capabilities for LLMs. We first present our overall framework, which resolves architectural incompatibility via an in-place design that repurposes existing MLP blocks, while ensuring computational efficiency with a chunk-wise update mechanism (\cref{sec:inplace_ttt_framework}). We then detail our novel LM-aligned objective, which is explicitly designed for LLMs by aligning with the Next-Token Prediction (NTP) goal (\cref{sec:value_and_loss}). Following this, we provide a theoretical analysis of our objective's superior properties (\cref{sec:theory}) and conclude with practical implementation details (\cref{sec:implement}).

\subsection{Overall Framework}
\label{sec:inplace_ttt_framework}

\textbf{Repurposing MLP Blocks for In-Place Adaptation.} 
Previous TTT research has largely positioned it as a potential solution to replace the attention mechanism. However, these prior studies were typically conducted at moderate scales, a regime vastly different from that of modern, billion-parameter LLMs. Consequently, replacing the core attention mechanism—whose learned properties are critical to an LLM's capabilities—is a high-risk architectural modification. Moreover, introducing any new, randomly-initialized layer also creates a conflict with the billions of trained parameters of LLMs, necessitating costly and often impractical retraining to resolve this imbalance.

Our core insight is to sidestep these challenges entirely. Instead of replacing or adding components, we repurpose a ubiquitous module--the Multi-Layer Perceptron (MLP) block--to also serve as the fast weights. Recalling the TTT formulations in \cref{sec:preliminary}, there exist no constraints on the choice of fast weights, i.e., any parameters can serve as fast weights updated via the TTT mechanism. In particular, the MLP blocks in Transformers can also be viewed as a form of key-value memory~\citep{geva2020transformer}, functioning as a ``slow weights" for the vast, general knowledge acquired during pre-training. It is therefore a natural extension to leverage this same component to also function as the adaptive "fast weights", dynamically internalizing transient, in-context information at inference time.

Formally, we adapt the widely used gated MLP architecture~\citep{grattafiori2024llama,yang2025qwen3}. Given the hidden representation $\mathbf{H}$, the gated MLP computes its output representation $\mathbf{O}=\left(\phi(\mathbf{H}\mathbf{W}_{\mathrm{gate}}^\top)\odot(\mathbf{H}\mathbf{W}_{\mathrm{up}}^\top)\right)\mathbf{W}_{\mathrm{down}}^\top$. In our framework, we treat the input projections $\mathbf{W}_{\mathrm{up}}$ and $\mathbf{W}_{\mathrm{gate}}$ as frozen slow weights, while repurposing the final projection matrix, $\mathbf{W}_{\mathrm{down}}$, as the adaptable fast weights. By exclusively updating $\mathbf{W}_{\mathrm{down}}$ in-place, we preserve the model's architectural integrity, transforming TTT from a disruptive restructuring into a lightweight, ``drop-in" enhancement for LLMs. 

\textbf{Efficient Adaptation with Chunk-Wise Updates.} 
Beyond architectural compatibility, our in-place design also unlocks significant computational efficiencies. Conventional TTT methods, by aiming to replace the attention mechanism, were bound to inefficient per-token updates to enforce strict causality and perform fine-grained token mixing. Chunk-wise updating approaches have been explored by recent works to achieve acceleration ~\citep{li2025tnt,sun2023retentive,behrouz2024titans,irie2025fastweightprogramminglinear}. Our framework also follows to sidestep the trade-off entirely. Since we only adapt the MLP blocks and leave the attention layers intact, we are liberated from the per-token constraint, enabling a far more efficient chunk-wise update strategy which further bypasses the small chunk constraints (our ablation study results (Section \ref{sec:exp_ablation}) also verify that our framework is naturally well-suited for chunk-wise—and specifically large chunk-wise—updates, achieving optimal performance with chunk sizes of 512 to 1024).

The process operates as follows. Given the intermediate activations $\mathbf{Z}=\phi(\mathbf{H}\mathbf{W}_{\mathrm{gate}}^\top)\odot(\mathbf{H}\mathbf{W}_{\mathrm{up}}^\top)\in\mathbb{R}^{n\times d_{\mathrm{ff}}}$ and corresponding value targets and outputs $\mathbf{V},\mathbf{O}\in\mathbb{R}^{n\times d_{\mathrm{model}}}$, we partition them into $k$ non-overlapping chunks of size $C$, denoted $\mathbf{\square}_{[i]}=\mathbf{\square}_{iC+1:(i+1)C}\in\mathbb{R}^{C\times d'}$ for $\square\in\{\mathbf{Z},\mathbf{V},\mathbf{O}\}, i\in[k]$ and $d'$ being their corresponding dimension. Let $\mathbf{W}_{\mathrm{down}}^{(i)}$ be the fast weights state before processing chunk $i$ and $\mathbf{W}_{\mathrm{down}}^{(0)}=\mathbf{W}_{\mathrm{down}}$. For each chunk $i\in[k]$, we perform two sequential operations:

\begin{enumerate}
\setlength\itemsep{2pt}
    \item \textbf{Apply Operation:} The current state of the fast weights $\mathbf{W}_{\mathrm{down}}^{(i)}$ are used to process chunk $\mathbf{\mathbf{Z}}_{[i]}$, i.e., $\mathbf{O}_{[i]} = \mathbf{Z}_{[i]} (\mathbf{W}_{\mathrm{down}}^{(i)})^\top$.
    \item \textbf{Update Operation:} The fast weight $\mathbf{W}_{\mathrm{down}}^{(i)}$ are updated using $\mathbf{Z}_{[i]}$ as keys and $\mathbf{V}_{[i]}$ as values, which is performed via one gradient descent step with a loss function $\mathcal{L}$ and a learning rate $\eta$:
    $
        \mathbf{W}_{\mathrm{down}}^{(i+1)} = \mathbf{W}_{\mathrm{down}}^{(i)} - \eta \nabla_{\mathbf{W}} \mathcal{L}\left( \mathbf{Z}_{[i]} (\mathbf{W}_{\mathrm{down}}^{(i)})^\top, \textbf{V}_{[i]} \right).
    $
\end{enumerate}

This chunk-wise update strategy is designed for modern hardware, similar to prior attempts~\citep{li2025tnt,sun2023retentive,behrouz2024titans,irie2025fastweightprogramminglinear}. Moreover, due to our in-place MLP adaptation, we can use a large chunk size $C$ to process large blocks of tokens at once, thereby highly leveraging parallelism and utilizing the computational power of GPUs or TPUs.

\subsection{LM-Aligned Objective}
\label{sec:value_and_loss}

With the efficient, in-place adaptation framework established, the performance of In-Place TTT now hinges on the design of its learning objective. In this subsection, we introduce our Language Modeling-Aligned objective, which is explicitly tailored for LLMs.

Prior TTT approaches typically use a reconstruction target, e.g., $\mathcal{L}\big(f_{\mathbf{W}}(k), v\big)$ where both $k$ and $v$ are linear projection outputs of the same input token $x$~\citep{sun2020ttt,sun2024tttLayers,zhang2025tttdr}, which encourages the model to simply memorize the current token's representation. We argue that this is suboptimal for language modeling tasks. Instead, we propose to align the objective with the Next-Token Prediction (NTP) goal governing LLMs. 

To achieve this, we specify the target $v$ to include future token information. Formally, we derive our target $\hat{\mathbf{V}} = \operatorname{Conv1D}(\mathbf{X}_0) \mathbf{W}_{\mathrm{target}}$, where $\mathbf{X}_0\in\mathbb{R}^{n\times d_{\mathrm{model}}}$ denotes the token embedding, $\operatorname{Conv1D}(\cdot)$ is the 1D Convolution operator and $\mathbf{W}_{\mathrm{target}} \in \mathbb{R}^{d_{\mathrm{model}} \times d_{\mathrm{model}}}$ is a trainable projection matrix. Under this formulation, the amount of future token information can be controlled in our target $\hat{\mathbf{V}}$, e.g., the Next-Token target can be achieved by parameterizing $\mathbf{W}_{\mathrm{target}}$ as an identity transformation and assigning $\operatorname{Conv1D}(\cdot)$'s kernel weights to be 1 for the next token and 0 for other tokens.

With this aligned target, we use the widely used similarity measure to instantiate our loss function for simplicity, i.e., $\mathcal{L}(\cdot,\cdot)=-\langle\cdot, \cdot\rangle_F$. Under this loss function, the gradient with respect to the fast weights in our chunk-wise mechanism can be directly derived:
\begin{equation}
\label{eq:final_update_rule}
\mathbf{W}_{\mathrm{down}}^{(i)} = \mathbf{W}_{\mathrm{down}}^{(i-1)} + \eta\hat{\mathbf{V}}_{[i]}^\top \mathbf{Z}_{[i]}.
\end{equation}

\subsection{Theoretical Analysis}
\label{sec:theory}

Intuitively, our LM-Aligned objective explicitly encourages the fast weights to compress predictively useful information for future tokens, thereby enhancing the model's capacity for dynamic adaptation. In this subsection, we formalize this intuition by theoretically analyzing the benefits of our objective. We ground our analysis within the canonical \textit{induction head} setting~\citep{olsson2022induction,elhage2021framework}, a mechanism understood to be critical for in-context learning in LLMs.

\textbf{Setup.}
Consider an input sequence where a key-value pair, $(x_{t^*},x_{t^*+1})=(k^*,v^*)$, appears at an arbitrary position $t^*$. Subsequently, at a query position $n>t^*$, the key $k^*$ reappears, such that $x_n=k^*$. The model must then correctly predict the associated value, $x_{n+1}=v^*$. 

Without loss of generality, we analyze a single Transformer block enhanced by our In-Place TTT. Let $\mathbf{Z}_t \in \mathbb{R}^{d_{\mathrm{ff}}}$ be the intermediate activation of token $x_t$ and $\mathbf{E}_w \in \mathbb{R}^{d_{\mathrm{model}}}$ be the token embedding for $x_w$. In our framework, the fast weights update from prior context chunks is $\Delta \mathbf{W}_{\mathrm{down}} = \eta \sum_{t \in \text{prior chunks}} {\mathbf{V}}_{t} \mathbf{Z}_{t}^\top$. This update then change the output logit at the query position $n$ by $\Delta \boldsymbol{\ell}_{n}[w] = \mathbf{E}_w^\top \Delta \mathbf{W}_{\mathrm{down}}\mathbf{Z}_{n}$. We compare two choices for the TTT target $\mathbf{V}_t$:

\begin{itemize}
    \item \textit{Reconstruction Target:} $\mathbf{V}_t^{\mathrm{rec}} = \mathbf{E}_{x_t}$, the embedding of the \textit{current} token.
    \item \textit{LM-Aligned Target:} $\mathbf{V}_t^{\mathrm{LM}} = \mathbf{E}_{x_{t+1}}$, the embedding of the \textit{next} token.
\end{itemize}

\textbf{Assumptions.}
Our analysis rests on two mild assumptions about the properties of the embeddings and intermediate activations, which are standard in theoretical analyses of Transformers:

\begin{enumerate}
    \item \textit{Approximate Orthogonality of Embeddings}: For any two distinct tokens $w_i, w_j \in \mathcal{V}$, their embeddings are nearly orthogonal: $|\mathbf{e}_{w_i}^\top \mathbf{e}_{w_j}| \leq \epsilon$ for a small constant $\epsilon > 0$. Additionally, embeddings have a non-trivial magnitude: $\|\mathbf{e}_{w_i}\|^2 \geq c_{\text{norm}}^2 > 0$ for some constant $c_{\text{norm}}$.
    \item \textit{Key-Query Alignment}: The intermediate activations $\mathbf{Z}_n$ for the query token $x_n=k^*$ is aligned with $\mathbf{Z}_{t^*}$ of its corresponding key token $x_{t^*}=k^*$: $\mathbb{E}[\mathbf{z}_{t^*}^\top \mathbf{z}_{n}] = c_{\text{align}} > 0$. For other positions $t \neq t^*$, the tokens are unrelated to the query, i.e, $\mathbb{E}\!\left[(\mathbf{V}_t \mathbf{Z}_t^\top)\,\mathbf{Z}_n\right] = \mathbf{0}.$
\end{enumerate}

With this setup, we present our main theoretical result:

\begin{theorem}[Logit-wise Effect of LM-Aligned Target v.s. Reconstruction Target]

\label{thm:logit}

Under the specified setup and assumptions, for a learning rate $\lambda_{\text{lr}}>0$, the expected change in logits $\Delta\boldsymbol{\ell}_{n}$ after one update step using the \textit{LM-Aligned target} satisfies:
\begin{align}
\text{\textit{(Correct logit increases)}}\quad
\mathbb{E}\left[\Delta\boldsymbol{\ell}_{n}[v^*]\right] &\geq \lambda_{\text{lr}} \cdot c_{\text{norm}}^2 \cdot c_{\text{align}},\label{eq:ntp_correct_logit} \\
\text{\textit{(Other logits almost unchanged)}}\quad
\left|\mathbb{E}\left[\Delta\boldsymbol{\ell}_{n}[w]\right]\right| &\leq \lambda_{\text{lr}} \cdot \epsilon \cdot c_{\text{align}}, \quad \forall w \neq v^*.
\label{eq:ntp_other_logits}
\end{align}
In contrast, for the \textit{reconstruction target}, the expected change in logits is negligible for the correct token:
$
        |\mathbb{E}\left[\Delta\boldsymbol{\ell}_{n} [v^*] \right]| \leq \lambda_{\text{lr}} \cdot \epsilon \cdot c_{\text{align}}.
$

\end{theorem}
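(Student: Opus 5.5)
The approach is to expand the expected logit change using linearity of expectation and bilinearity, which reduces everything to a single surviving term. Write $\Delta\mathbf{W}_{\mathrm{down}} = \lambda_{\text{lr}}\sum_{t} \mathbf{V}_t\mathbf{Z}_t^\top$, where the sum runs over positions in prior chunks. Then
\begin{equation*}
\Delta\boldsymbol{\ell}_n[w] = \lambda_{\text{lr}}\sum_t \mathbf{E}_w^\top \mathbf{V}_t \mathbf{Z}_t^\top\mathbf{Z}_n .
\end{equation*}
Since $\mathbf{E}_w$ is a fixed vector, I take expectations term by term. By the second half of the Key-Query Alignment assumption, $\mathbb{E}[(\mathbf{V}_t\mathbf{Z}_t^\top)\mathbf{Z}_n]=\mathbf{0}$ for every $t\neq t^*$. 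Left-multiplying by $\mathbf{E}_w^\top$ kills all those terms, leaving
\begin{equation*}
\mathbb{E}[\Delta\boldsymbol{\ell}_n[w]] = \lambda_{\text{lr}}\,\mathbb{E}\bigl[\mathbf{E}_w^\top\mathbf{V}_{t^*}\,\mathbf{z}_{t^*}^\top\mathbf{z}_n\bigr].
\end{equation*}
This holds for either choice of target. I assume that $t^*$ lies in a prior chunk, as the setup intends.

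Next I substitute the two targets. The token identities at $t^*$ and $t^*+1$ are fixed by the setup, so $\mathbf{V}_{t^*}$ is deterministic: it equals $\mathbf{E}_{v^*}$ for the LM-aligned target and $\mathbf{E}_{k^*}$ for the reconstruction target. The scalar $\mathbf{E}_w^\top\mathbf{V}_{t^*}$ therefore factors out of the expectation, and by the alignment assumption what remains is $\mathbb{E}[\mathbf{z}_{t^*}^\top\mathbf{z}_n]=c_{\text{align}}$. So $\mathbb{E}[\Delta\boldsymbol{\ell}_n[w]] = \lambda_{\text{lr}}\, c_{\text{align}}\,\mathbf{E}_w^\top\mathbf{V}_{t^*}$, and all three claims follow from the embedding assumptions.

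\begin{itemize}
\item \emph{LM-aligned, $w=v^*$:} here $\mathbf{E}_{v^*}^\top\mathbf{E}_{v^*}=\|\mathbf{E}_{v^*}\|^2\ge c_{\text{norm}}^2$. Multiplying by $\lambda_{\text{lr}}c_{\text{align}}>0$ gives \eqref{eq:ntp_correct_logit}.
\item \emph{LM-aligned, $w\neq v^*$:} near-orthogonality gives $|\mathbf{E}_w^\top\mathbf{E}_{v^*}|\le\epsilon$, which yields \eqref{eq:ntp_other_logits}.
\item \emph{Reconstruction, $w=v^*$:} the relevant inner product is $\mathbf{E}_{v^*}^\top\mathbf{E}_{k^*}$. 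When $v^*\neq k^*$ it is at most $\epsilon$ in absolute value, which gives the stated bound.
\end{itemize}

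The main obstacle is modelling rather than algebra, and it has two parts. First, I must justify treating $\mathbf{V}_{t^*}$ as deterministic so that it can be pulled out of the expectation; this is what converts the alignment assumption $\mathbb{E}[\mathbf{z}_{t^*}^\top\mathbf{z}_n]=c_{\text{align}}$ into a statement about the logits. I would state explicitly that the randomness is only over the activations, conditional on the token pair $(k^*,v^*)$. Second, the reconstruction bound needs $k^*\neq v^*$. I would add this as an implicit hypothesis of the induction-head setting; if $k^*=v^*$, the two targets coincide at $t^*$ anyway.
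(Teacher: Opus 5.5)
Your proposal is correct and follows essentially the same route as the paper's proof. You expand $\Delta\boldsymbol{\ell}_n[w]$ as a sum of scalar products, discard every $t\neq t^*$ term via Assumption 2, treat $\mathbf{V}_{t^*}$ as deterministic to get $\lambda_{\text{lr}}\,c_{\text{align}}\,\mathbf{E}_w^\top\mathbf{V}_{t^*}$, and then apply the embedding assumptions case by case; the hypotheses you flag explicitly ($t^*$ in a prior chunk, $k^*\neq v^*$) are exactly the ones the paper uses implicitly.
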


The proof is provided in Appendix \ref{app:proof}. In \cref{thm:logit}, the LM-Aligned target is guaranteed in expectation to increase the logit of the correct next token $v^*$ and keep that of other tokens approximately unchanged, directly aiding the model's prediction task. In contrast, the reconstruction target provides no such predictive benefit, failing to increase the logit of the correct token. In practice, our implementation extends this principle from a single next token to a learned, localized combination of future tokens, which also aligns with recent promising results of Multi-Token Prediction in advanced LLMs~\citep{liu2024deepseek} as an effective extension of the NTP objective. This allows our In-Place TTT to capture a richer predictive signal, thereby compressing useful contextual information more effectively than simple reconstruction.

\begin{figure}
    \centering
    \includegraphics[width=1.0\linewidth]{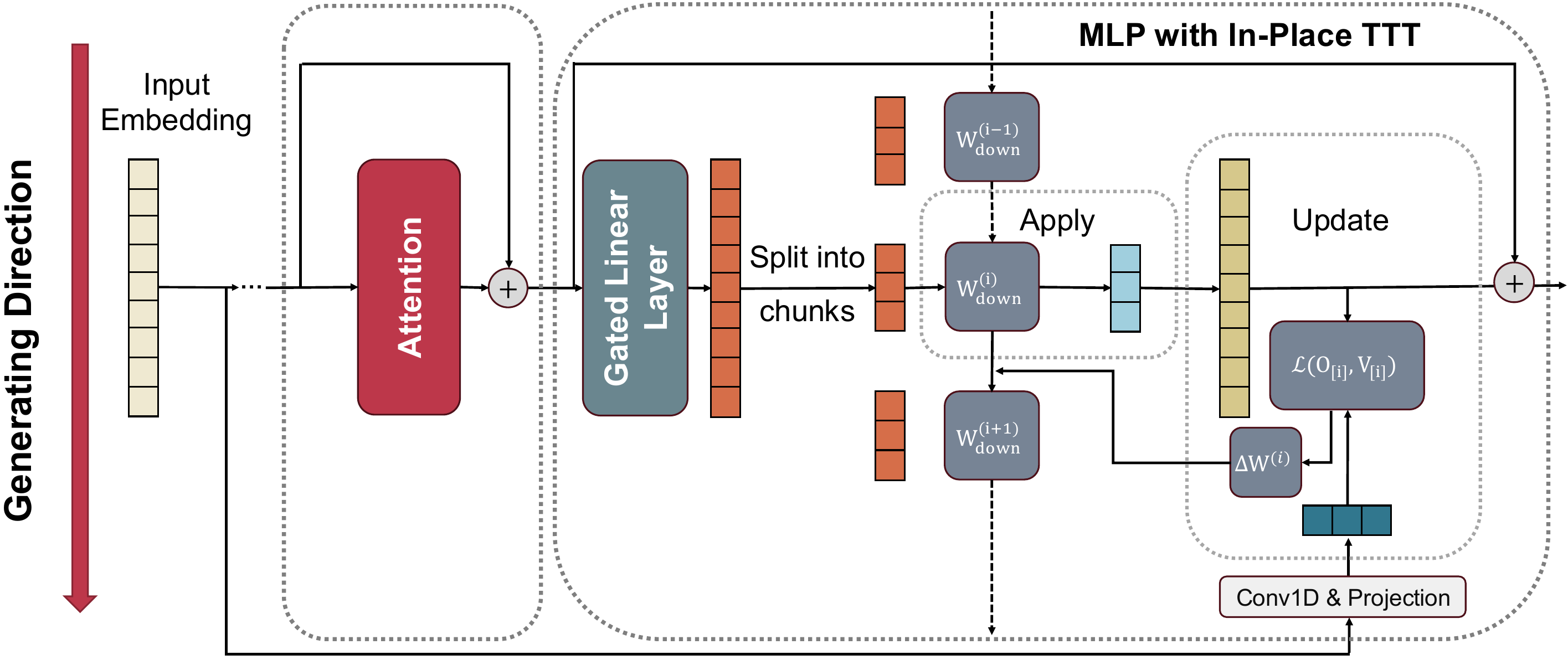}
    \caption{The overall framework of our In-Place Test-Time Training. The module operates sequentially on input chunks. For each chunk, the current fast weights are first \textit{applied} to the intermediate activations $\mathbf{Z}$ to produce the output. Then, these weights are \textit{updated} using the activations $\mathbf{Z}$ and a value $\mathbf{V}$ derived from the token embeddings. This "apply-then-update" cycle allows the model to dynamically adapt to incoming context in a strictly causal manner.}
    \label{fig:pipeline}
\end{figure}

\subsection{Implementation Details}
\label{sec:implement}

Combining aforementioned designs, \cref{fig:pipeline} illustrates our In-Place TTT framework. Here we further elaborate on practical implementation details, which are engineered for high efficiency and scalability on modern hardware. In particular, our approach is fully compatible with \textit{Context Parallelism (CP)}, relying on a parallel scan algorithm to process sequence chunks simultaneously while preserving the strict causal semantics of an auto-regressive update. Additional discussions are further presented.

\textbf{Efficient Implementation with Context Parallelism.}
The associative nature of our update rule in \cref{eq:final_update_rule} makes In-Place TTT amenable to a context-parallel implementation, which partitions a sequence along its length and processes the chunks simultaneously. The process unfolds into three stages: (\romannumeral1) for all chunks $i \in \{1, \dots, T\}$, we compute the intermediate activations $\mathbf{Z}_{[i]}$ and the fast weight update $\Delta \mathbf{W}_{\mathrm{down}}^{(i)} = (\hat{\mathbf{V}}_{[i]})^\top \mathbf{Z}_{[i]}$ in parallel; (\romannumeral2) a single prefix sum over $[...,\Delta \mathbf{W}_{\mathrm{down}}^{(i)},\Delta \mathbf{W}_{\mathrm{down}}^{(i+1)},...]$ is conducted to compute the aggregated updates for each chunk: $\Delta \mathbf{S}_i = \sum_{j=1}^{i-1} \Delta \mathbf{W}_j$, which can be highly efficient on modern accelerators; (\romannumeral3) the effective fast weights for each chunk, $\mathbf{W}_{\mathrm{down}}^{(i-1)} = \mathbf{W}_{\mathrm{down}}^{(0)} + \eta \Delta\mathbf{S}_i$, and the corresponding output, $\mathbf{O}_{[i]} = \mathbf{Z}_{[i]} (W_{\mathrm{down}}^{(i-1)})^\top$, are computed in parallel.

\textbf{Causality and Boundary Handling.}
To ensure that the update delta for chunk $i$ itself contains no future information, we apply causal padding to the 1D convolution when generating the value. This isolates each delta calculation to its respective chunk, making the parallel scan mathematically equivalent to a sequential update. Moreover, at document boundaries, the fast weights are reset to their pre-trained state to prevent context leakage across independent sequences. The final context parallel algorithm is presented in \cref{alg:inplaceTTT-cp} in Appendix \ref{app:cp}.

\textbf{Discussion.}
In summary, our implementation of In-Place TTT synergistically combines a simple, computationally efficient update rule with a parallel scan algorithm. This design choice makes our method not only fast and scalable but also mathematically equivalent to a strictly causal sequential process, thanks to careful boundary and padding management. The resulting module is \textit{CP-native}, fully causal, and can be seamlessly integrated as a drop-in replacement for the MLP block in standard Transformer architectures. Lastly, it is also noteworthy that the core principles of our framework are orthogonal to the specific choice of loss functions and its optimizer, which have been widely studied in the broader TTT literature, an exploration we leave as a promising direction for future work.

\section{Experiments}
\label{sec:exp}

In this section, we conduct a series of experiments to empirically validate the effectiveness of our In-Place TTT framework. Specifically, we aim to answer the following research questions:

\begin{itemize}[leftmargin=8pt]

\item \textbf{Q1}: How effectively can In-Place TTT enhance pre-trained LLMs in a ``drop-in" manner?
\item \textbf{Q2}: When trained from scratch, how does In-Place TTT compare against prior TTT approaches?
\item \textbf{Q3}: What are the effects of key design choices in our In-Place TTT framework?
\end{itemize}

Using language modeling tasks of various scales as a practical proxy, we answer each question with carefully designed experiments in the following sub-sections. Due to space limits, we present detailed descriptions of experimental settings in Appendix \ref{app:exp}.

\begin{table}[!h]
\centering
\caption{Evaluation results on RULER~\citep{hsieh2024ruler}. Average accuracy (\%) is reported, with the best results in \textbf{bold}.}
\renewcommand{\arraystretch}{1.2}
\setlength{\tabcolsep}{3pt} 
\sisetup{detect-weight=true, detect-family=true}
\begin{tabular}{l S[table-format=2.2] S[table-format=2.2] S[table-format=2.2] S[table-format=2.2] S[table-format=2.2] S[table-format=2.2] | S[table-format=2.2]}
\toprule
& \multicolumn{6}{c}{In-Domain Evaluation} & \multicolumn{1}{c}{Extrapolation} \\
\cmidrule(r){2-7} \cmidrule(l){8-8}
\multicolumn{1}{c}{Model} & {4k} & {8k} & {16k} & {32k} & {64k} & {128k} & {256k} \\
\midrule
Mistral-7B~\citep{jiang2023mistral7b} & 93.6 & 91.2 & 87.2 & 75.4 & 49.0 & 13.8 & {-} \\
GLM3-6B~\citep{glm2024chatglmfamilylargelanguage}	& 87.8 & 83.4 & 78.6 & 69.9	 & 56.0 & 42.0 & {-} \\
Phi3-medium-14B \citep{abdin2024phi3technicalreporthighly} &	93.3 & 93.2	 & 91.1 & 86.8 & 78.6 & 46.1 & {-} \\
Llama3-8B~\citep{gradientlongcontextllama3} & 92.8 & 90.3 & 85.7 & 79.9 & 76.3 & 69.5 & {-} \\
Qwen3-4B (Instruct)~\citep{yang2025qwen3} & 95.1 & 93.6 & 91.0 & 87.8 & 77.8 & 66.0 & {-} \\
\midrule
Baseline & \bfseries 96.6 & 94.1 & 92.1 & 88.7 & 74.3 & 74.8 & 41.7 \\
In-Place TTT & 96.1 & \bfseries 95.6 & \bfseries 92.7 & \bfseries 89.3 & \bfseries 78.7 & \bfseries 77.0 & \bfseries 43.9 \\
\bottomrule
\end{tabular}
\label{tab:ruler128k_longct}
\end{table}

\subsection{In-Place TTT as a Drop-in Enhancement for Pre-trained LLMs}
\label{sec:exp_continual_pretrain}

To validate In-Place TTT as a ``drop-in'' enhancement for existing, pre-trained LLMs, we start with the competitive open-sourced \textit{Qwen3-4B-Base} model. Its original context window is 32k, thereby we can simulate the long-horizon, evolving tasks requiring Test-Time Training capabilities by language modeling tasks of varying context lengths. In particular, we compare the performance of (1) \textit{Qwen3-4B-Base} (Baseline); (2) \textit{Qwen3-4B-Base + In-Place TTT} (In-Place TTT). Both models undergo the exact same continual training curriculum, ensuring a fair comparison where our In-Place TTT is the only variable.

\textbf{Training and Evaluation.} The continual training curriculum is divided into two stages: an initial phase of $\sim$20B tokens with 32k context length, followed by a second phase of $\sim$15B tokens with 128k context length. The detailed descriptions of training dataset can be found in Appendix~\ref{app:dataset}. To effectively manage these long sequences, we adapt the model's Rotary Position Embeddings using YaRN~\citep{peng2023yarn}. We evaluate the long-context performance of both models on the RULER benchmark~\citep{hsieh2024ruler} using the popular OpenCompass framework~\citep{2023opencompass}, with context lengths ranging from 4k to 256k. The 256k setting specifically measures the models' ability to extrapolate beyond the 128k context length limit. Detailed descriptions of training details can be found in Appendix~\ref{app:train}.

\textbf{Results and Discussion.} The results, summarized in \cref{tab:ruler128k_longct}, demonstrate that In-Place TTT significantly boosts the long-context proficiency of the pre-trained model. In particular, a clear trend can be easily seen from the results: while both models are competitive at short contexts, Qwen3-4B-Base enhanced by our In-Place TTT establishes a consistent and widening advantage as the sequence length increases. It achieves substantial gains at the 64k and 128k context lengths. Crucially, this advantage is maintained when extrapolating to a 256k context, demonstrating superior generalization. These findings confirm that In-Place TTT can be seamlessly integrated into a pre-trained LLM to boost its long-context proficiency. The model's strong performance at and beyond the context length validates our method as a practical and powerful tool for extending the capabilities of existing LLMs.

\textbf{Extension to More Models.}
To verify the generality of our approach, we further apply In-Place TTT as a drop-in enhancement to two additional models: LLaMA-3.1-8B~\citep{grattafiori2024llama} and Qwen3-14B-Base~\citep{yang2025qwen3}, following the same continual training protocol. As shown in Table~\ref{tab:extension}, In-Place TTT consistently improves the RULER scores across all context lengths on both models. The gains are particularly pronounced at longer contexts, e.g., +2.1 at 64k for LLaMA-3.1-8B and +2.7 at 64k for Qwen3-14B-Base. Moreover, the improvement is maintained when combined with YaRN~\citep{peng2023yarn} for position extrapolation, confirming that our method is orthogonal to RoPE extension techniques. These results, spanning different model families and scales (4B--14B), reinforce that In-Place TTT is a broadly applicable drop-in enhancement for pre-trained LLMs.

\begin{table}[t]
\centering
\caption{Extension of In-Place TTT to LLaMA-3.1-8B and Qwen3-14B-Base on the RULER benchmark. We report the average accuracy (\%) with the best results in \textbf{bold}.}
\label{tab:extension}
\vspace{2mm}
\small
\begin{tabular}{llcccccc}
\toprule
Base Model & Method & 4k & 8k & 16k & 32k & 64k & 64k+YaRN \\
\midrule
\multirow{2}{*}{LLaMA-3.1-8B}
 & Baseline       & 93.9 & 92.1 & 92.5 & 91.1 & 81.6 & -- \\
 & In-Place TTT   & \textbf{94.4} & \textbf{93.0} & \textbf{93.3} & \textbf{91.7} & \textbf{83.7} & -- \\
\midrule
\multirow{2}{*}{Qwen3-14B}
 & Baseline       & 96.8 & 95.0 & 94.6 & 90.7 & 67.9 & 81.3 \\
 & In-Place TTT   & \textbf{97.2} & \textbf{95.7} & \textbf{95.2} & \textbf{91.2} & \textbf{70.6} & \textbf{82.5} \\
\bottomrule
\end{tabular}
\end{table}

\subsection{Pre-training from Scratch: A Comparative Analysis}
\label{sec:exp_from_scratch}

\begin{figure}[!h]
\centering
\begin{subfigure}[t]{0.48\textwidth}
 \captionsetup{position=top}
  \caption*{(a) Sliding Window Perplexity of 500M Model}
 \includegraphics[width=\linewidth]{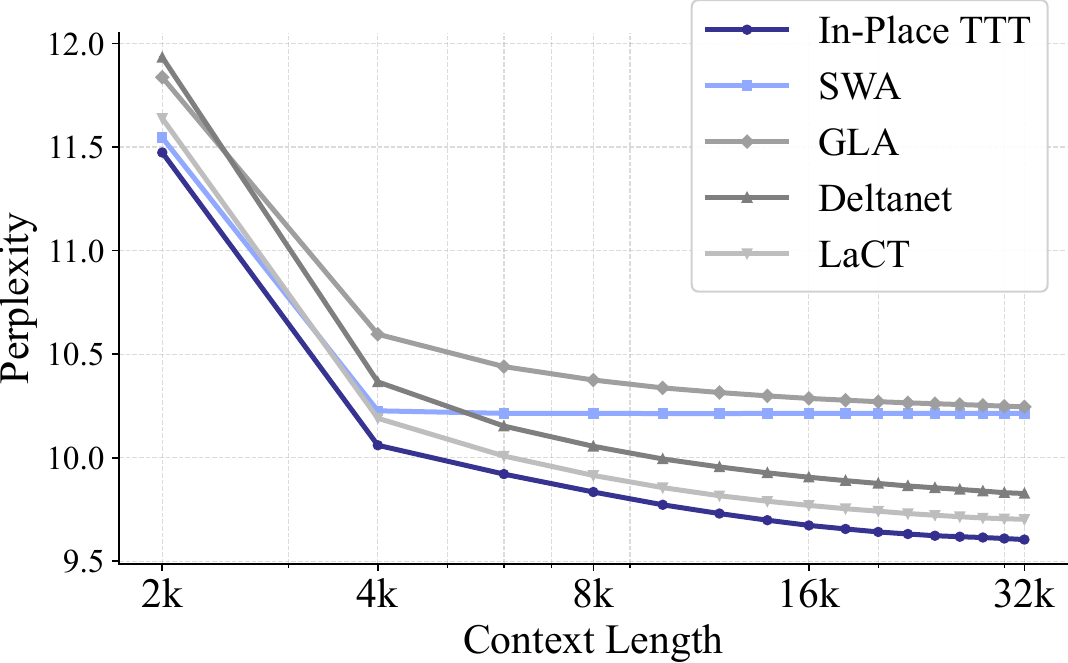}
 \end{subfigure}\hfill
 \begin{subfigure}[t]{0.48\textwidth}
 \captionsetup{position=top}
 \caption*{(b) Sliding Window Perplexity of 1.5B Model}
 \includegraphics[width=\linewidth]{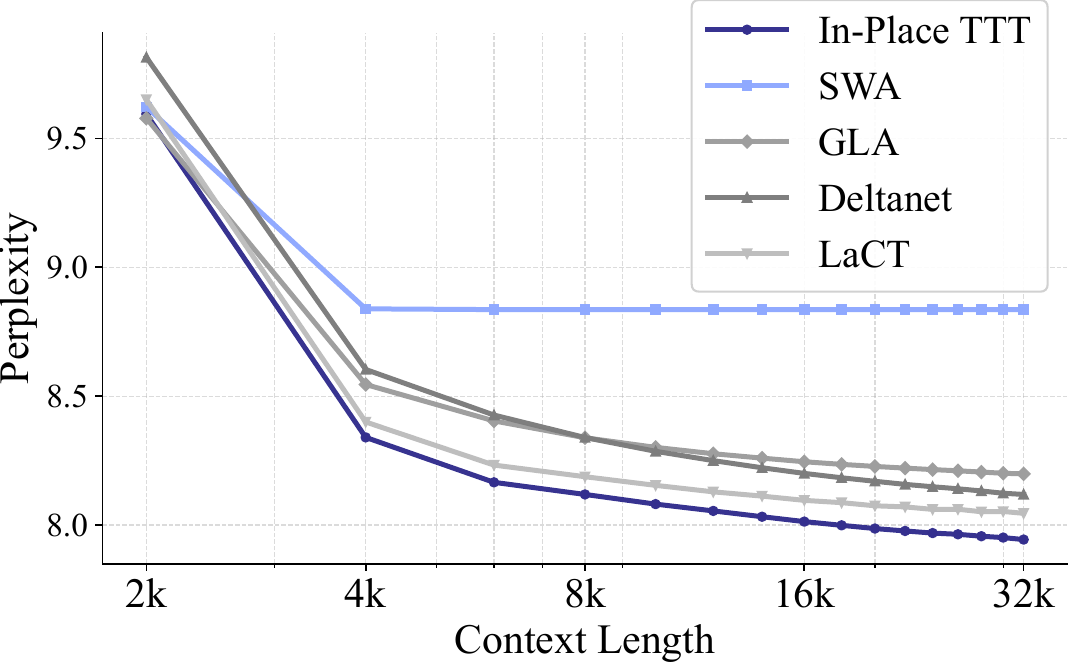}
\end{subfigure}

 \caption{Sliding Window Perplexity at varying context lengths on the Pile dataset for 500M (left) and 1.5B (right) parameter models. Our In-Place TTT consistently achieves lower perplexity than all competitive baselines.}
  \label{fig:val-loss-500m-2b}

\end{figure}

\begin{table}[!h]
\centering
\setlength{\tabcolsep}{2pt} 
\resizebox{\textwidth}{!}{\renewcommand{\arraystretch}{1.2}
\sisetup{detect-weight=true, detect-family=true}
\begin{tabular}{ll *{5}{S[table-format=2.2]} *{3}{S[table-format=2.2]}}
\toprule
& & \multicolumn{5}{c}{Common Sense Reasoning} & \multicolumn{3}{c}{Long-Context Evaluation} \\
\cmidrule(r){3-7} \cmidrule(l){8-10}
\multicolumn{2}{c}{Model Architecture} & {HellaSwag} & {ARC-E} & {ARC-C} & {MMLU} & {PIQA} & {RULER-4k} & {RULER-8k} & {RULER-16k} \\
\midrule
\multirow{2}{*}{Baselines} & Full Attn.& 55.67 & 64.52 & 33.19 & 36.43 & 72.63 & 45.77 & 38.09 & 6.58 \\
& SWA & 54.92 & 64.18 & 32.85 & 36.06 & 72.58 & 14.77 & 9.91 & 5.07 \\
\midrule
\multirow{2}{*}{I.P. TTT} & Full Attn.& \bfseries 55.85 & \bfseries 64.98 & 32.34 & \bfseries 37.42 & \bfseries 73.29 & \bfseries 49.98 & \bfseries 43.82 & \bfseries 19.99 \\
& SWA  & 55.24 & 64.60 & \bfseries 33.70 & 36.48 & 72.03 & 28.33 & 26.80 & 7.57 \\
\bottomrule
\end{tabular}}
\caption{Evaluation results of 4B models on common sense reasoning and long-context evaluation benchmarks. Best performance is in \textbf{bold}. ``SWA'' is Sliding-Window Attention, ``Full Attn.'' is Full Attention, and ``I.P. TTT'' is our In-Place TTT.}
\label{tab:4b_results}
\end{table}

Having demonstrated In-Place TTT's effectiveness as a ``drop-in'' module, we further evaluate its performance and scalability when integrated into models pre-trained from scratch. Our analysis proceeds in two stages: we first establish its language modeling capabilities at the 500M and 1.5B scales, and then assess its scalability and impact on a larger 4B model.

\textbf{Experimental Setup.}
Firstly, we benchmark our In-Place TTT against prior TTT-related approaches and efficient attention methods based on \citet{long_data_collection} at 500M and 1.5B parameter scales. Various competitive baselines are compared: (1) standard Transformer with sliding window attention (SWA)~\citep{child2019generating,beltagy2020longformer} (2) Gated Linear Attention (GLA)~\citep{yang2024gated}; (3) DeltaNet~\citep{schlag2021linear,yangparallelizing,yang2024gdn} (4) Large Chunk Test-Time Training (LaCT)~\citep{zhang2025tttdr}. For a fair comparison, both In-Place TTT and LaCT are built upon an SWA backbone. All models are trained on sequences with a 32k context length.

Building on these results, we further scale up to 4B-parameter models to evaluate scalability of our In-Place TTT approach. In particular, we compare Transformers with Full Attention and Transformers with SWA against their counterparts enhanced by our In-Place TTT. These models are trained for 120B tokens with an 8k context length. Detailed descriptions of datasets, model configurations, and training procedures are available in \cref{app:dataset,app:model,app:train}.

\textbf{Evaluation.}
\looseness=-1 For the 500M and 1.5B models, we evaluate their long-context utilization using \textit{Sliding Window Perplexity} on a validation set comprised of Pile~\citep{gao2020pile} and Proof-Pile-2~\citep{paster2023openwebmath}. This metric measures perplexity on a fixed final block of tokens when extending the preceding context, where a decreasing perplexity trend indicates effective context usage. For the 4B models, we conduct a broader evaluation on a suite of downstream tasks, including common sense reasoning benchmarks (HellaSwag~\citep{zellers2019hellaswag}, ARC \citep{allenai:arc}, MMLU \citep{hendrycks2020measuring,hendrycks2021ethics}, PIQA~\citep{bisk2019piqa}) and the long-context RULER benchmark~\citep{hsieh2024ruler}.

\textbf{Results and Discussion.}
\looseness=-1In Figure~\ref{fig:val-loss-500m-2b}, we plot the sliding window perplexity against context length for 500M/1.5B model. It can be easily seen that our In-Place TTT consistently achieves lower validation perplexity than all competitive baselines, with its performance steadily improving up to the full 32k context. This sustained improvement suggests its core mechanism successfully compresses and utilizes information from incoming context.

Moreover, the results in Table~\ref{tab:4b_results} further show that 4B-parameter Transformers with both Full Attention and SWA are consistently improved across most common sense reasoning tasks. Furthermore, models with our In-Place TTT yield superior performance on the long-context evaluation, e.g., RULER-16k score is improved from 6.58 to 19.99 for the Transformer with Full Attention and RULER-8k score is boosted from 9.91 to 26.80 for the SWA model. These substantial gains, particularly across models of various scales, establish our In-Place TTT as a highly effective and scalable approach.

\subsection{Ablation Studies: On the Impact of Key Design Choices}
\label{sec:exp_ablation}

Lastly, we conduct a series of ablation studies on RULER with a 1.7B-parameter model, providing deeper insights into our design choices. Detailed settings are presented in Appendix~\ref{app:model} and~\ref{app:train}.

\textbf{Impact of State Size.}
We first investigate how performance scales with the fast weights size, which can be controlled by varying the number of TTT-enabled layers. Figure~\ref{fig:ablation} (a) shows a clear trend that the performance of our In-Place TTT consistently improves along with the state size scaling. This confirms that larger fast weights allow the model to more effectively adapt to contextual information, which further supports our repurposing approach leveraging the large amount of MLP states.

\textbf{Impact of Chunk Size.} The chunk size $C$ in \cref{sec:inplace_ttt_framework} controls both the granularity of fast weights updating and parallelism, exposing a tradeoff between efficiency and performance. By varying the chunk size, Figure~\ref{fig:ablation} (b) shows that both $C=512$ and $C=1024$ competitively achieve better performance compared to other choices, while $C=1024$ has better efficiency.

\textbf{Impact of LM-Aligned Objective.}
Next, we delve deep into our tailored LM-Aligned objective in \cref{sec:value_and_loss}, where the target is defined as
$\hat{\mathbf{V}} = \operatorname{Conv1D}(\mathbf{X}_0) \mathbf{W}_{\mathrm{target}}$.
In particular, the $\operatorname{Conv1D}$ operator is used to yield targets containing future token information, and the $\mathbf{W}_{\mathrm{target}}$ is a projection transformation. In Figure~\ref{fig:ablation} (c), we comprehensively ablate combinations of these components. The result shows that both of them are necessary for performance guarantee, while $\operatorname{Conv1D}$ plays an essential role on long context and $\mathbf{W}_{\mathrm{target}}$ is crucial on short context. These results align with our theoretical analysis in \cref{sec:theory}, strongly supporting our motivation to derive a tailored objective for language modeling.

\textbf{Efficiency Impact of In-Place TTT.} We further study the computational overhead introduced by our In-Place TTT. In \cref{fig:efficiency}, we compare both prefill throughput and memory consumptions of with and without our In-Place TTT. The results indeed verify the efficiency of our practical implementations.

\begin{figure}[!t]
\centering
 \includegraphics[width=\linewidth]{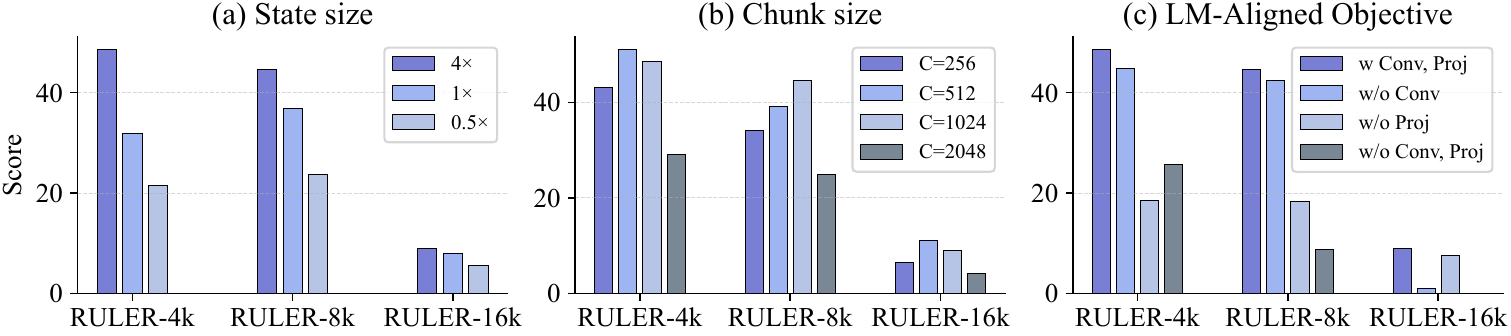}

 \caption{Ablation studies on the key design choices of the In-Place TTT framework, evaluated on the RULER benchmark with a 1.7B parameter model. The plots illustrate the impact of: (a) State size, showing that performance improves as the state size scales; (b) Chunk size, demonstrating a performance trade-off where intermediate sizes (e.g., 512, 1024) are optimal; and (c) The LM-Aligned Value objective, confirming that both the convolution (w Conv) and the projection (w Proj) are crucial.}
  \label{fig:ablation}
\end{figure}

\begin{figure}[t]
\centering
\begin{subfigure}[t]{0.23\textwidth}
\captionsetup{position=top,justification=centering}
 \caption*{(a) Throughput (SWA)}
\includegraphics[width=\linewidth]{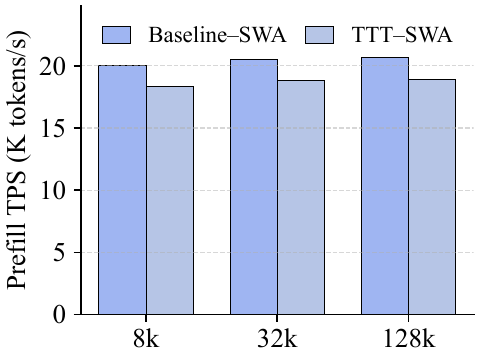}
\end{subfigure}\hfill
\begin{subfigure}[t]{0.24\textwidth}
\captionsetup{position=top,justification=centering}
\caption*{(b) Throughput (Full)}
\includegraphics[width=\linewidth]{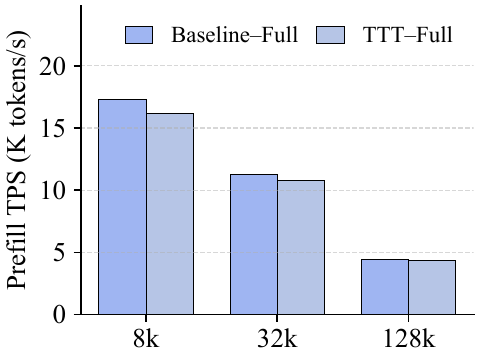}
\end{subfigure}\hfill
\begin{subfigure}[t]{0.24\textwidth}
\captionsetup{position=top,justification=centering}
\caption*{(c) Memory (SWA)}
\includegraphics[width=\linewidth]{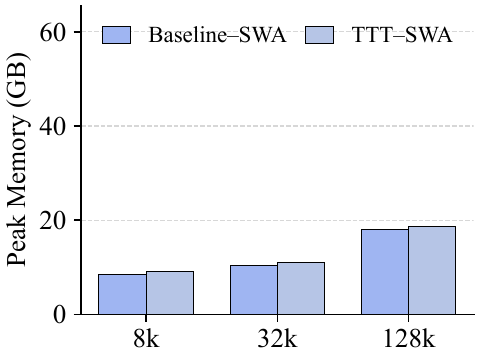}
\end{subfigure}\hfill
\begin{subfigure}[t]{0.24\textwidth}
\captionsetup{position=top,justification=centering}
\caption*{(d) Memory (Full)}
\includegraphics[width=\linewidth]{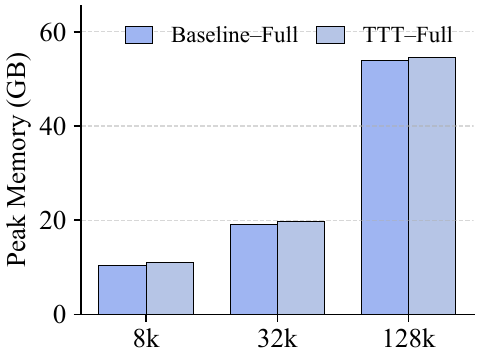}
\end{subfigure}

\caption{Efficiency analysis of In-Place TTT. Both prefill throughput (a, b) and peak memory (c, d) metrics are presented for 4B models with Sliding-Window Attention (SWA) and Full Attention at various context lengths. Our In-Place TTT introduces negligible overhead in practical scenarios.}
\label{fig:efficiency}

\end{figure}

\section{Related Work}
\label{app:related_work}

\textbf{Test-Time Training (TTT).}
Test-Time Training (TTT) is a paradigm that enables a model to adapt dynamically in response to continuous streams of data at inference by updating a small subset of its parameters, known as \textit{fast weights}~\citep{ba2016fastweights}. Initially demonstrating success in computer vision~\citep{sun2020ttt,wang2021tent}, TTT has since been extended to numerous other modalities—including language~\citep{sun2024tttLayers}, video~\citep{dalal2025one}, and audio~\citep{dumpala2023testtimetrainingspeech}—underscoring its broad applicability. Research in this area has largely focused on two avenues for improving TTT's effectiveness: the design of more sophisticated test-time optimizers~\citep{behrouz2025itsconnectedjourneytesttime} and the formulation of novel, self-supervised online learning objectives~\citep{behrouz2024titans,karami2025lattice}. However, the computational efficiency of TTT remained a critical bottleneck due to its inherently sequential, per-token update process. The chunk wise Test-Time Training framework was the first to directly address this challenge by introducing a chunk-wise update mechanism to better leverage parallel hardware~\citep{behrouz2024titans,irie2025fastweightprogramminglinear,zhang2025tttdr,sun2023retentive,yau2025sequentialparalleldualityprefixscannable,li2025tnt}. Despite these advances, TTT's function as the primary token mixer necessitates reliance on small chunks to preserve performance—this in turn creates a bottleneck that limits the massive parallelism needed to fully utilize modern accelerators. Furthermore, prior work has not addressed how to seamlessly integrate TTT into large, pre-trained models, nor developed learning objectives specifically tailored for the autoregressive nature of LLMs, which are gaps our work directly addresses.

\textbf{Efficient Long-Context Architectures.}
A parallel line of research seeks to extend the effective context window of LLMs by mitigating the quadratic complexity of the standard attention mechanism. Major approaches include: 1) Sparse attention methods, which restrict the range of token-to-token interactions via fixed patterns like sliding or strided windows~\citep{child2019generating,beltagy2020longformer,yuan2025nativesparseattentionhardwarealigned}; 2) Linear-time variants, which approximate the attention mechanism or replace it with efficient recurrent or gated formulations, such as linear attention \citep{katharopoulos2020linear,schlag2021linear} Gated Linear Attention (GLA)~\citep{yang2023gla}; and 3) State-Space Models (SSMs), which compress sequence history into a compact latent state, enabling processing with linear complexity~\citep{dao2023mamba,dao2024transformersssmsgeneralizedmodels}. Recently, the \textit{delta rule} has emerged as a popular design choice for linear attention and SSMs, enabling better experessivity and highly parallelizable implementations~\citep{yang2024delta}.
These architectural advances are complementary to our framework. While they focus on efficiently \textit{processing} long contexts, TTT provides a mechanism for online \textit{adaptation} to the information within that context. Our In-Place TTT can be also naturally integrated with these efficient backbones, as they also have MLP blocks. And we leave these as the future work.

\textbf{Memory Design and Augmentation.}
A related domain of research involves augmenting neural architectures with explicit memory modules to enhance their reasoning and contextual understanding capabilities. These approaches can be broadly distinguished by their function: some are designed to store persistent, task-agnostic knowledge in an external memory bank, while others focus on capturing transient, data-dependent information from the immediate context~\citep{khandelwal2019knnlm,guu2020realm,lewis2020rag,wang2024memoryllmselfupdatablelargelanguage,yu2025memagent}. The latter, contextual memories, have been implemented using various mechanisms, including recurrent state transitions, attention-based context aggregation in Transformers~\citep{dai2019transformerxl}, and rapid, gradient-based updates to fast weights~\citep{yang2024delta}. Test-Time Training (TTT) represents a powerful instance of this latter category, which conceptually extends the notion of a hidden state found in Recurrent Neural Networks (RNNs). Rather than compressing contextual history into a fixed-size activation vector, TTT designates a subset of the model's own parameters—the \textit{fast weights}—to function as a high-capacity, dynamic memory~\citep{ba2016fastweights,schlag2021linear}. These weights are updated on the fly at inference time, allowing the model to continuously internalize evolving contextual information and thereby function as an expressive, online evolving state.

\section{Conclusion}
We introduced In-Place Test-Time Training, a practical framework that resolves the critical barriers of TTT for LLMs. Principled design choices are proposed including an in-place mechanism that repurposes existing MLP blocks, an efficient chunk-wise update rule, and a theoretically-grounded objective aligned with language modeling. Extensive experiments validate that our approach not only serves as a powerful ``drop-in" enhancement for pre-trained LLMs but also outperforms strong baselines when trained from scratch. By providing a scalable solution for on-the-fly adaptation, our work makes a promising step towards a new paradigm of more dynamic, continual learning for LLMs.

\clearpage

\bibliographystyle{plainnat}
\bibliography{iclr2026_conference}

\begin{thebibliography}{67}
\providecommand{\natexlab}[1]{#1}
\providecommand{\url}[1]{\texttt{#1}}
\expandafter\ifx\csname urlstyle\endcsname\relax
  \providecommand{\doi}[1]{doi: #1}\else
  \providecommand{\doi}{doi: \begingroup \urlstyle{rm}\Url}\fi

\bibitem[Abdin et~al.(2024)Abdin, Aneja, Awadalla, Awadallah, Awan, Bach, Bahree, Bakhtiari, Bao, Behl, Benhaim, Bilenko, Bjorck, Bubeck, Cai, et~al.]{abdin2024phi3technicalreporthighly}
Marah Abdin, Jyoti Aneja, Hany Awadalla, Ahmed Awadallah, Ammar~Ahmad Awan, Nguyen Bach, Amit Bahree, Arash Bakhtiari, Jianmin Bao, Harkirat Behl, Alon Benhaim, Misha Bilenko, Johan Bjorck, Sébastien Bubeck, Martin Cai, et~al.
\newblock Phi-3 technical report: A highly capable language model locally on your phone, 2024.
\newblock URL \url{https://arxiv.org/abs/2404.14219}.

\bibitem[Ba et~al.(2016)Ba, Hinton, Mnih, Leibo, and Ionescu]{ba2016fastweights}
Jimmy~Lei Ba, Geoffrey~E. Hinton, Volodymyr Mnih, Joel~Z. Leibo, and Catalin Ionescu.
\newblock Using fast weights to attend to the recent past.
\newblock In \emph{Advances in Neural Information Processing Systems}, 2016.
\newblock URL \url{https://arxiv.org/abs/1610.06258}.

\bibitem[Behrouz et~al.(2024)Behrouz, Zhong, and Mirrokni]{behrouz2024titans}
Ali Behrouz, Peilin Zhong, and Vahab Mirrokni.
\newblock Titans: Learning to memorize at test time.
\newblock \emph{arXiv preprint arXiv:2501.00663}, 2024.

\bibitem[Behrouz et~al.(2025{\natexlab{a}})Behrouz, Razaviyayn, Zhong, and Mirrokni]{behrouz2025itsconnectedjourneytesttime}
Ali Behrouz, Meisam Razaviyayn, Peilin Zhong, and Vahab Mirrokni.
\newblock It's all connected: A journey through test-time memorization, attentional bias, retention, and online optimization, 2025{\natexlab{a}}.
\newblock URL \url{https://arxiv.org/abs/2504.13173}.

\bibitem[Behrouz et~al.(2025{\natexlab{b}})Behrouz, Razaviyayn, Zhong, and Mirrokni]{behrouz2025s}
Ali Behrouz, Meisam Razaviyayn, Peilin Zhong, and Vahab Mirrokni.
\newblock It's all connected: A journey through test-time memorization, attentional bias, retention, and online optimization.
\newblock \emph{arXiv preprint arXiv:2504.13173}, 2025{\natexlab{b}}.

\bibitem[Beltagy et~al.(2020)Beltagy, Peters, and Cohan]{beltagy2020longformer}
Iz~Beltagy, Matthew~E. Peters, and Arman Cohan.
\newblock Longformer: The long-document transformer.
\newblock \emph{arXiv preprint arXiv:2004.05150}, 2020.
\newblock URL \url{https://arxiv.org/abs/2004.05150}.

\bibitem[Bisk et~al.(2019)Bisk, Zellers, Bras, Gao, and Choi]{bisk2019piqa}
Yonatan Bisk, Rowan Zellers, Ronan~Le Bras, Jianfeng Gao, and Yejin Choi.
\newblock Piqa: Reasoning about physical commonsense in natural language, 2019.
\newblock URL \url{https://arxiv.org/abs/1911.11641}.

\bibitem[Brown et~al.(2020)Brown, Mann, Ryder, Subbiah, Kaplan, Dhariwal, Neelakantan, Shyam, Sastry, Askell, et~al.]{brown2020language}
Tom Brown, Benjamin Mann, Nick Ryder, Melanie Subbiah, Jared~D Kaplan, Prafulla Dhariwal, Arvind Neelakantan, Pranav Shyam, Girish Sastry, Amanda Askell, et~al.
\newblock Language models are few-shot learners.
\newblock \emph{Advances in neural information processing systems}, 33:\penalty0 1877--1901, 2020.

\bibitem[Chan et~al.(2024)Chan, Chowdhury, Jaffe, Aung, Sherburn, Mays, Starace, Liu, Maksin, Patwardhan, et~al.]{chan2024mle}
Jun~Shern Chan, Neil Chowdhury, Oliver Jaffe, James Aung, Dane Sherburn, Evan Mays, Giulio Starace, Kevin Liu, Leon Maksin, Tejal Patwardhan, et~al.
\newblock Mle-bench: Evaluating machine learning agents on machine learning engineering.
\newblock \emph{arXiv preprint arXiv:2410.07095}, 2024.

\bibitem[Child et~al.(2019)Child, Gray, Radford, and Sutskever]{child2019generating}
Rewon Child, Scott Gray, Alec Radford, and Ilya Sutskever.
\newblock Generating long sequences with sparse transformers.
\newblock \emph{arXiv preprint arXiv:1904.10509}, 2019.

\bibitem[Chowdhery and et~al.(2022)]{chowdhery2022palm}
Aakanksha Chowdhery and et~al.
\newblock Palm: Scaling language modeling with pathways.
\newblock \emph{arXiv preprint arXiv:2204.02311}, 2022.
\newblock URL \url{https://arxiv.org/abs/2204.02311}.

\bibitem[Clark et~al.(2018)Clark, Cowhey, Etzioni, Khot, Sabharwal, Schoenick, and Tafjord]{allenai:arc}
Peter Clark, Isaac Cowhey, Oren Etzioni, Tushar Khot, Ashish Sabharwal, Carissa Schoenick, and Oyvind Tafjord.
\newblock Think you have solved question answering? try arc, the ai2 reasoning challenge.
\newblock \emph{arXiv:1803.05457v1}, 2018.

\bibitem[Contributors(2023)]{2023opencompass}
OpenCompass Contributors.
\newblock Opencompass: A universal evaluation platform for foundation models.
\newblock \url{https://github.com/open-compass/opencompass}, 2023.

\bibitem[Dai et~al.(2019)Dai, Yang, Yang, Carbonell, Le, and Salakhutdinov]{dai2019transformerxl}
Zihang Dai, Zhilin Yang, Yiming Yang, Jaime Carbonell, Quoc~V. Le, and Ruslan Salakhutdinov.
\newblock Transformer-{XL}: Attentive language models beyond a fixed-length context.
\newblock In \emph{Proceedings of the 57th Annual Meeting of the Association for Computational Linguistics}, pages 2978--2988. Association for Computational Linguistics, 2019.
\newblock URL \url{https://aclanthology.org/P19-1285/}.

\bibitem[Dalal et~al.(2025)Dalal, Koceja, Xu, Zhao, Han, Cheung, Kautz, Choi, Sun, and Wang]{dalal2025one}
Karan Dalal, Daniel Koceja, Jiarui Xu, Yue Zhao, Shihao Han, Ka~Chun Cheung, Jan Kautz, Yejin Choi, Yu~Sun, and Xiaolong Wang.
\newblock One-minute video generation with test-time training.
\newblock In \emph{Proceedings of the Computer Vision and Pattern Recognition Conference}, pages 17702--17711, 2025.

\bibitem[Dao and Gu(2024)]{dao2024transformersssmsgeneralizedmodels}
Tri Dao and Albert Gu.
\newblock Transformers are ssms: Generalized models and efficient algorithms through structured state space duality, 2024.
\newblock URL \url{https://arxiv.org/abs/2405.21060}.

\bibitem[Dao et~al.(2023)Dao, Gu, et~al.]{dao2023mamba}
Tri Dao, Albert Gu, et~al.
\newblock Hungry {H}ungry {H}ippos: Towards language modeling with state space models.
\newblock \emph{arXiv preprint arXiv:2312.00752}, 2023.

\bibitem[Dumpala et~al.(2023)Dumpala, Sastry, and Oore]{dumpala2023testtimetrainingspeech}
Sri~Harsha Dumpala, Chandramouli Sastry, and Sageev Oore.
\newblock Test-time training for speech, 2023.
\newblock URL \url{https://arxiv.org/abs/2309.10930}.

\bibitem[Elhage et~al.(2021)Elhage, Nanda, Olsson, Brown, and et~al.]{elhage2021framework}
Nelson Elhage, Neel Nanda, Catherine Olsson, Tom Brown, and et~al.
\newblock A mathematical framework for transformer circuits.
\newblock Transformer Circuits Thread, 2021.
\newblock URL \url{https://transformer-circuits.pub/2021/framework/index.html}.

\bibitem[Gao et~al.(2020)Gao, Biderman, Black, Golding, Hoppe, Foster, Phang, He, Thite, Nabeshima, Presser, and Leahy]{gao2020pile}
Leo Gao, Stella Biderman, Sid Black, Laurence Golding, Travis Hoppe, Charles Foster, Jason Phang, Horace He, Anish Thite, Noa Nabeshima, Shawn Presser, and Connor Leahy.
\newblock The pile: An 800gb dataset of diverse text for language modeling.
\newblock \emph{arXiv preprint arXiv:2101.00027}, 2020.
\newblock URL \url{https://arxiv.org/abs/2101.00027}.

\bibitem[Gao et~al.(2024)Gao, Tow, Abbasi, Biderman, Black, DiPofi, Foster, Golding, Hsu, Le~Noac'h, Li, McDonell, Muennighoff, Ociepa, Phang, Reynolds, Schoelkopf, Skowron, Sutawika, Tang, Thite, Wang, Wang, and Zou]{eval-harness}
Leo Gao, Jonathan Tow, Baber Abbasi, Stella Biderman, Sid Black, Anthony DiPofi, Charles Foster, Laurence Golding, Jeffrey Hsu, Alain Le~Noac'h, Haonan Li, Kyle McDonell, Niklas Muennighoff, Chris Ociepa, Jason Phang, Laria Reynolds, Hailey Schoelkopf, Aviya Skowron, Lintang Sutawika, Eric Tang, Anish Thite, Ben Wang, Kevin Wang, and Andy Zou.
\newblock The language model evaluation harness, 07 2024.
\newblock URL \url{https://zenodo.org/records/12608602}.

\bibitem[Geva et~al.(2020)Geva, Schuster, Berant, and Levy]{geva2020transformer}
Mor Geva, Roei Schuster, Jonathan Berant, and Omer Levy.
\newblock Transformer feed-forward layers are key-value memories.
\newblock \emph{arXiv preprint arXiv:2012.14913}, 2020.

\bibitem[GLM(2024)]{glm2024chatglmfamilylargelanguage}
Team GLM.
\newblock Chatglm: A family of large language models from glm-130b to glm-4 all tools, 2024.
\newblock URL \url{https://arxiv.org/abs/2406.12793}.

\bibitem[Grattafiori et~al.(2024)Grattafiori, Dubey, Jauhri, Pandey, Kadian, Al-Dahle, Letman, Mathur, Schelten, Vaughan, et~al.]{grattafiori2024llama}
Aaron Grattafiori, Abhimanyu Dubey, Abhinav Jauhri, Abhinav Pandey, Abhishek Kadian, Ahmad Al-Dahle, Aiesha Letman, Akhil Mathur, Alan Schelten, Alex Vaughan, et~al.
\newblock The llama 3 herd of models.
\newblock \emph{arXiv preprint arXiv:2407.21783}, 2024.

\bibitem[Guu et~al.(2020)Guu, Lee, Tung, Pasupat, and Chang]{guu2020realm}
Kelvin Guu, Kenton Lee, Zora Tung, Panupong Pasupat, and Ming-Wei Chang.
\newblock Realm: Retrieval-augmented language model pre-training.
\newblock In \emph{ICML}, 2020.

\bibitem[Hendrycks et~al.(2021{\natexlab{a}})Hendrycks, Burns, Basart, Critch, Li, Song, and Steinhardt]{hendrycks2021ethics}
Dan Hendrycks, Collin Burns, Steven Basart, Andrew Critch, Jerry Li, Dawn Song, and Jacob Steinhardt.
\newblock Aligning ai with shared human values.
\newblock \emph{Proceedings of the International Conference on Learning Representations (ICLR)}, 2021{\natexlab{a}}.

\bibitem[Hendrycks et~al.(2021{\natexlab{b}})Hendrycks, Burns, Basart, Zou, Mazeika, Song, and Steinhardt]{hendrycks2020measuring}
Dan Hendrycks, Collin Burns, Steven Basart, Andy Zou, Mantas Mazeika, Dawn Song, and Jacob Steinhardt.
\newblock Measuring massive multitask language understanding.
\newblock In \emph{International Conference on Learning Representations}, 2021{\natexlab{b}}.
\newblock URL \url{https://arxiv.org/abs/2009.03300}.

\bibitem[Hsieh et~al.(2024)Hsieh, Sun, Kriman, Acharya, Rekesh, Jia, Zhang, and Ginsburg]{hsieh2024ruler}
Cheng-Ping Hsieh, Simeng Sun, Samuel Kriman, Shantanu Acharya, Dima Rekesh, Fei Jia, Yang Zhang, and Boris Ginsburg.
\newblock {RULER}: What's the real context size of your long-context language models?
\newblock \emph{arXiv preprint arXiv:2404.06654}, 2024.
\newblock URL \url{https://arxiv.org/abs/2404.06654}.

\bibitem[Hu et~al.(2025)Hu, Zhang, Chen, Wen, Shuai, Luo, Xiao, Li, and Tan]{hu2025ttl}
Jinwu Hu, Zhitian Zhang, Guohao Chen, Xutao Wen, Chao Shuai, Wei Luo, Bin Xiao, Yuanqing Li, and Mingkui Tan.
\newblock Test-time learning for large language models.
\newblock \emph{arXiv preprint arXiv:2505.20633}, 2025.
\newblock URL \url{https://arxiv.org/abs/2505.20633}.
\newblock Accepted at ICML 2025.

\bibitem[Irie and Gershman(2025)]{irie2025fastweightprogramminglinear}
Kazuki Irie and Samuel~J. Gershman.
\newblock Fast weight programming and linear transformers: from machine learning to neurobiology, 2025.
\newblock URL \url{https://arxiv.org/abs/2508.08435}.

\bibitem[Jiang et~al.(2023)Jiang, Sablayrolles, Mensch, Bamford, Chaplot, de~las Casas, Bressand, Lengyel, Lample, Saulnier, Lavaud, Lachaux, Stock, Scao, Lavril, Wang, Lacroix, and Sayed]{jiang2023mistral7b}
Albert~Q. Jiang, Alexandre Sablayrolles, Arthur Mensch, Chris Bamford, Devendra~Singh Chaplot, Diego de~las Casas, Florian Bressand, Gianna Lengyel, Guillaume Lample, Lucile Saulnier, Lélio~Renard Lavaud, Marie-Anne Lachaux, Pierre Stock, Teven~Le Scao, Thibaut Lavril, Thomas Wang, Timothée Lacroix, and William~El Sayed.
\newblock Mistral 7b, 2023.
\newblock URL \url{https://arxiv.org/abs/2310.06825}.

\bibitem[Karami and Mirrokni(2025)]{karami2025lattice}
Mahdi Karami and Vahab Mirrokni.
\newblock Lattice: Learning to efficiently compress the memory.
\newblock \emph{arXiv preprint arXiv:2504.05646}, 2025.

\bibitem[Katharopoulos et~al.(2020)Katharopoulos, Vyas, Pappas, and Fleuret]{katharopoulos2020linear}
Angelos Katharopoulos, Apoorv Vyas, Nikolaos Pappas, and Fran{\c{c}}ois Fleuret.
\newblock Transformers are rnns: Fast autoregressive transformers with linear attention.
\newblock In \emph{Proceedings of the 37th International Conference on Machine Learning}, Proceedings of Machine Learning Research. PMLR, 2020.
\newblock URL \url{https://arxiv.org/abs/2006.16236}.

\bibitem[Khandelwal et~al.(2020)Khandelwal, Levy, Jurafsky, Zettlemoyer, and Lewis]{khandelwal2019knnlm}
Urvashi Khandelwal, Omer Levy, Dan Jurafsky, Luke Zettlemoyer, and Mike Lewis.
\newblock Generalization through memorization: Nearest neighbor language models.
\newblock In \emph{ICLR}, 2020.

\bibitem[Lewis et~al.(2020)Lewis, Perez, Piktus, et~al.]{lewis2020rag}
Patrick Lewis, Ethan Perez, Aleksandra Piktus, et~al.
\newblock Retrieval-augmented generation for knowledge-intensive nlp tasks.
\newblock In \emph{NeurIPS}, 2020.

\bibitem[Li et~al.(2025)Li, Behrouz, Deng, Zhong, Kacham, Karami, Razaviyayn, and Mirrokni]{li2025tnt}
Zeman Li, Ali Behrouz, Yuan Deng, Peilin Zhong, Praneeth Kacham, Mahdi Karami, Meisam Razaviyayn, and Vahab Mirrokni.
\newblock Tnt: Improving chunkwise training for test-time memorization.
\newblock \emph{arXiv preprint arXiv:2511.07343}, 2025.

\bibitem[Liu et~al.(2024)Liu, Feng, Xue, Wang, Wu, Lu, Zhao, Deng, Zhang, Ruan, et~al.]{liu2024deepseek}
Aixin Liu, Bei Feng, Bing Xue, Bingxuan Wang, Bochao Wu, Chengda Lu, Chenggang Zhao, Chengqi Deng, Chenyu Zhang, Chong Ruan, et~al.
\newblock Deepseek-v3 technical report.
\newblock \emph{arXiv preprint arXiv:2412.19437}, 2024.

\bibitem[Olsson et~al.(2022)Olsson, Elhage, Nanda, Joseph, DasSarma, Henighan, Mann, Askell, Bai, Chen, Conerly, Drain, Ganguli, Hatfield-Dodds, Hernandez, Johnston, Jones, Kernion, Lovitt, Ndousse, Amodei, Brown, Clark, Kaplan, McCandlish, and Olah]{olsson2022induction}
Catherine Olsson, Nelson Elhage, Neel Nanda, Nicholas Joseph, Nova DasSarma, Tom Henighan, Ben Mann, Amanda Askell, Yuntao Bai, Anna Chen, Tom Conerly, Dawn Drain, Deep Ganguli, Zac Hatfield-Dodds, Danny Hernandez, Scott Johnston, Andy Jones, Jackson Kernion, Liane Lovitt, Kamal Ndousse, Dario Amodei, Tom Brown, Jack Clark, Jared Kaplan, Sam McCandlish, and Chris Olah.
\newblock In-context learning and induction heads.
\newblock \emph{arXiv preprint arXiv:2209.11895}, 2022.
\newblock URL \url{https://arxiv.org/abs/2209.11895}.

\bibitem[{OpenAI}(2024)]{openai2024gpt4}
{OpenAI}.
\newblock Gpt-4 technical report.
\newblock \emph{arXiv preprint arXiv:2303.08774}, 2024.

\bibitem[Paster et~al.(2023)Paster, Santos, Azerbayev, and Ba]{paster2023openwebmath}
Keiran Paster, Marco~Dos Santos, Zhangir Azerbayev, and Jimmy Ba.
\newblock Openwebmath: An open dataset of high-quality mathematical web text, 2023.

\bibitem[Pekelis et~al.(2024)Pekelis, Feil, Moret, Huang, and Peng]{gradientlongcontextllama3}
Leonid Pekelis, Michael Feil, Forrest Moret, Mark Huang, and Tiffany Peng.
\newblock Llama 3 gradient: A series of long context models, 2024.
\newblock URL \url{https://gradient.ai/blog/scaling-rotational-embeddings-for-long-context-language-models}.

\bibitem[Peng et~al.(2023)Peng, Quesnelle, Fan, and Shippole]{peng2023yarn}
Bowen Peng, Jeffrey Quesnelle, Honglu Fan, and Enrico Shippole.
\newblock Yarn: Efficient context window extension of large language models.
\newblock \emph{arXiv preprint arXiv:2309.00071}, 2023.

\bibitem[Schlag et~al.(2021)Schlag, Irie, and Schmidhuber]{schlag2021linear}
Imanol Schlag, Kazuki Irie, and J{\"u}rgen Schmidhuber.
\newblock Linear transformers are secretly fast weight programmers.
\newblock In \emph{International Conference on Machine Learning}, pages 9355--9366. PMLR, 2021.

\bibitem[Silver and Sutton(2025)]{silver2025welcome}
David Silver and Richard~S Sutton.
\newblock Welcome to the era of experience.
\newblock \emph{Google AI}, 1, 2025.

\bibitem[Starace et~al.(2025)Starace, Jaffe, Sherburn, Aung, Chan, Maksin, Dias, Mays, Kinsella, Thompson, et~al.]{starace2025paperbench}
Giulio Starace, Oliver Jaffe, Dane Sherburn, James Aung, Jun~Shern Chan, Leon Maksin, Rachel Dias, Evan Mays, Benjamin Kinsella, Wyatt Thompson, et~al.
\newblock Paperbench: Evaluating ai's ability to replicate ai research.
\newblock \emph{arXiv preprint arXiv:2504.01848}, 2025.

\bibitem[Su et~al.(2023)Su, Lu, Pan, Murtadha, Wen, and Liu]{su2023roformer}
Jianlin Su, Yu~Lu, Shengfeng Pan, Ahmed Murtadha, Bo~Wen, and Yunfeng Liu.
\newblock Roformer: Enhanced transformer with rotary position embedding, 2023.

\bibitem[Sun et~al.(2020)Sun, Wang, Liu, Miller, Efros, and Hardt]{sun2020ttt}
Yu~Sun, Xiaolong Wang, Zhuang Liu, John Miller, Alexei~A. Efros, and Moritz Hardt.
\newblock Test-time training with self-supervision for generalization under distribution shifts.
\newblock In \emph{Proceedings of the 37th International Conference on Machine Learning}, volume 119 of \emph{Proceedings of Machine Learning Research}, pages 9229--9248. PMLR, 2020.
\newblock URL \url{https://proceedings.mlr.press/v119/sun20b.html}.

\bibitem[Sun et~al.(2024)Sun, Li, Dalal, Xu, Vikram, Zhang, Dubois, Chen, Wang, Koyejo, Hashimoto, and Guestrin]{sun2024tttLayers}
Yu~Sun, Xinhao Li, Karan Dalal, Jiarui Xu, Arjun Vikram, Genghan Zhang, Yann Dubois, Xinlei Chen, Xiaolong Wang, Sanmi Koyejo, Tatsunori Hashimoto, and Carlos Guestrin.
\newblock Learning to (learn at test time): Rnns with expressive hidden states.
\newblock \emph{arXiv preprint arXiv:2407.04620}, 2024.
\newblock URL \url{https://arxiv.org/abs/2407.04620}.

\bibitem[Sun et~al.(2023)Sun, Dong, Huang, Ma, Xia, Xue, Wang, and Wei]{sun2023retentive}
Yutao Sun, Li~Dong, Shaohan Huang, Shuming Ma, Yuqing Xia, Jilong Xue, Jianyong Wang, and Furu Wei.
\newblock Retentive network: A successor to transformer for large language models.
\newblock \emph{arXiv preprint arXiv:2307.08621}, 2023.

\bibitem[TogetherAI(2024)]{long_data_collection}
TogetherAI.
\newblock Long data collections database, 2024.

\bibitem[Touvron et~al.(2023)Touvron, Lavril, Izacard, Martinet, Lachaux, Lacroix, Rozi{\`e}re, Goyal, Hambro, Azhar, and et~al.]{touvron2023llama}
Hugo Touvron, Th{\'e}o Lavril, Gautier Izacard, Xavier Martinet, Marie-Anne Lachaux, Timoth{\'e}e Lacroix, Baptiste Rozi{\`e}re, Naman Goyal, Eric Hambro, Faisal Azhar, and et~al.
\newblock Llama: Open and efficient foundation language models.
\newblock \emph{arXiv preprint arXiv:2302.13971}, 2023.

\bibitem[Vaswani et~al.(2017)Vaswani, Shazeer, Parmar, Uszkoreit, Jones, Gomez, Kaiser, and Polosukhin]{vaswani2017attention}
Ashish Vaswani, Noam Shazeer, Niki Parmar, Jakob Uszkoreit, Llion Jones, Aidan~N. Gomez, Lukasz Kaiser, and Illia Polosukhin.
\newblock Attention is all you need.
\newblock In \emph{Advances in Neural Information Processing Systems (NeurIPS)}, 2017.

\bibitem[Wang et~al.(2021)Wang, Shelhamer, Liu, Olshausen, and Darrell]{wang2021tent}
Dequan Wang, Evan Shelhamer, Shaoteng Liu, Bruno Olshausen, and Trevor Darrell.
\newblock Tent: Fully test-time adaptation by entropy minimization.
\newblock In \emph{ICLR}, 2021.

\bibitem[Wang et~al.(2025)Wang, Shi, and Fox]{wang2025test}
Ke~Alexander Wang, Jiaxin Shi, and Emily~B Fox.
\newblock Test-time regression: a unifying framework for designing sequence models with associative memory.
\newblock \emph{arXiv preprint arXiv:2501.12352}, 2025.

\bibitem[Wang et~al.(2024)Wang, Gao, Chen, Jiang, Li, Yang, Yin, Li, Li, Yin, Shang, and McAuley]{wang2024memoryllmselfupdatablelargelanguage}
Yu~Wang, Yifan Gao, Xiusi Chen, Haoming Jiang, Shiyang Li, Jingfeng Yang, Qingyu Yin, Zheng Li, Xian Li, Bing Yin, Jingbo Shang, and Julian McAuley.
\newblock Memoryllm: Towards self-updatable large language models, 2024.
\newblock URL \url{https://arxiv.org/abs/2402.04624}.

\bibitem[Wei et~al.(2023)Wei, Wang, Schuurmans, Bosma, Ichter, Xia, Chi, Le, and Zhou]{wei2023chainofthoughtpromptingelicitsreasoning}
Jason Wei, Xuezhi Wang, Dale Schuurmans, Maarten Bosma, Brian Ichter, Fei Xia, Ed~Chi, Quoc Le, and Denny Zhou.
\newblock Chain-of-thought prompting elicits reasoning in large language models, 2023.
\newblock URL \url{https://arxiv.org/abs/2201.11903}.

\bibitem[Yang et~al.(2025)Yang, Li, Yang, Zhang, Hui, Zheng, Yu, Gao, Huang, Lv, et~al.]{yang2025qwen3}
An~Yang, Anfeng Li, Baosong Yang, Beichen Zhang, Binyuan Hui, Bo~Zheng, Bowen Yu, Chang Gao, Chengen Huang, Chenxu Lv, et~al.
\newblock Qwen3 technical report.
\newblock \emph{arXiv preprint arXiv:2505.09388}, 2025.

\bibitem[Yang et~al.(2023)Yang, Wang, Shen, Panda, and Kim]{yang2023gla}
Songlin Yang, Bailin Wang, Yikang Shen, Rameswar Panda, and Yoon Kim.
\newblock Gated linear attention transformers with hardware-efficient training.
\newblock \emph{arXiv preprint arXiv:2312.06635}, 2023.
\newblock URL \url{https://arxiv.org/abs/2312.06635}.

\bibitem[Yang et~al.(2024{\natexlab{a}})Yang, Kautz, and Hatamizadeh]{yang2024gdn}
Songlin Yang, Jan Kautz, and Ali Hatamizadeh.
\newblock Gated delta networks: Improving mamba2 with delta rule.
\newblock \emph{arXiv preprint arXiv:2412.06464}, 2024{\natexlab{a}}.

\bibitem[Yang et~al.(2024{\natexlab{b}})Yang, Wang, Shen, Panda, and Kim]{yang2024gated}
Songlin Yang, Bailin Wang, Yikang Shen, Rameswar Panda, and Yoon Kim.
\newblock Gated linear attention transformers with hardware-efficient training.
\newblock In \emph{International Conference on Machine Learning}, pages 56501--56523. PMLR, 2024{\natexlab{b}}.

\bibitem[Yang et~al.(2024{\natexlab{c}})Yang, Wang, Zhang, Shen, and Kim]{yang2024delta}
Songlin Yang, Bailin Wang, Yu~Zhang, Yikang Shen, and Yoon Kim.
\newblock Parallelizing linear transformers with the delta rule over sequence length.
\newblock \emph{arXiv preprint arXiv:2406.06484}, 2024{\natexlab{c}}.
\newblock URL \url{https://arxiv.org/abs/2406.06484}.

\bibitem[Yang et~al.(2024{\natexlab{d}})Yang, Wang, Zhang, Shen, and Kim]{yangparallelizing}
Songlin Yang, Bailin Wang, Yu~Zhang, Yikang Shen, and Yoon Kim.
\newblock Parallelizing linear transformers with the delta rule over sequence length.
\newblock In \emph{The Thirty-eighth Annual Conference on Neural Information Processing Systems}, 2024{\natexlab{d}}.

\bibitem[Yau et~al.(2025)Yau, Gupta, Engelmayer, Irie, Jegelka, and Andreas]{yau2025sequentialparalleldualityprefixscannable}
Morris Yau, Sharut Gupta, Valerie Engelmayer, Kazuki Irie, Stefanie Jegelka, and Jacob Andreas.
\newblock Sequential-parallel duality in prefix scannable models, 2025.
\newblock URL \url{https://arxiv.org/abs/2506.10918}.

\bibitem[Yu et~al.(2025)Yu, Chen, Feng, Chen, Dai, Yu, Zhang, Ma, Liu, Wang, and Zhou]{yu2025memagent}
Hongli Yu, Tinghong Chen, Jiangtao Feng, Jiangjie Chen, Weinan Dai, Qiying Yu, Ya-Qin Zhang, Wei-Ying Ma, Jingjing Liu, Mingxuan Wang, and Hao Zhou.
\newblock Memagent: Reshaping long-context llm with multi-conv rl-based memory agent, 2025.
\newblock URL \url{https://arxiv.org/abs/2507.02259}.

\bibitem[Yuan et~al.(2025)Yuan, Gao, Dai, Luo, Zhao, Zhang, Xie, Wei, Wang, Xiao, Wang, Ruan, Zhang, Liang, and Zeng]{yuan2025nativesparseattentionhardwarealigned}
Jingyang Yuan, Huazuo Gao, Damai Dai, Junyu Luo, Liang Zhao, Zhengyan Zhang, Zhenda Xie, Y.~X. Wei, Lean Wang, Zhiping Xiao, Yuqing Wang, Chong Ruan, Ming Zhang, Wenfeng Liang, and Wangding Zeng.
\newblock Native sparse attention: Hardware-aligned and natively trainable sparse attention, 2025.
\newblock URL \url{https://arxiv.org/abs/2502.11089}.

\bibitem[Zellers et~al.(2019)Zellers, Holtzman, Bisk, Farhadi, and Choi]{zellers2019hellaswag}
Rowan Zellers, Ari Holtzman, Yonatan Bisk, Ali Farhadi, and Yejin Choi.
\newblock Hellaswag: Can a machine really finish your sentence?
\newblock In \emph{Proceedings of the 57th Annual Meeting of the Association for Computational Linguistics}, pages 4791--4800, 2019.
\newblock URL \url{https://arxiv.org/abs/1905.07830}.

\bibitem[Zhang et~al.(2025)Zhang, Bi, Hong, Zhang, Luan, Yang, Sunkavalli, Freeman, and Tan]{zhang2025tttdr}
Tianyuan Zhang, Sai Bi, Yicong Hong, Kai Zhang, Fujun Luan, Songlin Yang, Kalyan Sunkavalli, William~T. Freeman, and Hao Tan.
\newblock Test-time training done right.
\newblock \emph{arXiv preprint arXiv:2505.23884}, 2025.
\newblock URL \url{https://arxiv.org/abs/2505.23884}.

\end{thebibliography}

\clearpage

\beginappendix

\section{Proof of \cref{thm:logit}}
\label{app:proof}

For completeness, we first restate the theorem with the precise bounds derived from the assumptions.

\begingroup
\renewcommand{\thetheorem}{1}
\begin{theorem}[Logit-wise Effect of LM-Aligned Target v.s. Reconstruction Target (Restated)]

\label{thm:app-logit}

Under the specified setting and assumptions, for a learning rate $\lambda_{\text{lr}}>0$, the expected change in logits $\Delta\boldsymbol{\ell}_{n}$ after one update step using the \textbf{NTP-aligned target} satisfies:
\begin{align}
\text{\textbf{(Correct logit increases)}}\quad
\mathbb{E}\left[\Delta\boldsymbol{\ell}_{n}[v^*]\right] &\geq \lambda_{\text{lr}} \cdot c_{\text{norm}}^2 \cdot c_{\text{align}},\label{eq:app-ntp_correct-logit} \\
\text{\textbf{(Other logits almost unchanged)}}\quad
\left|\mathbb{E}\left[\Delta\boldsymbol{\ell}_{n}[w]\right]\right| &\leq \lambda_{\text{lr}} \cdot \epsilon \cdot c_{\text{align}}, \quad \forall w \neq v^*.
\label{eq:app-ntp-other-logits}
\end{align}
In contrast, for the \textbf{reconstruction target}, the expected change in logits is negligible for the correct token:
\begin{equation}
\label{eq:app-recon-flat}
        |\mathbb{E}\left[\Delta\boldsymbol{\ell}_{n} [v^*] \right]| \leq \lambda_{\text{lr}} \cdot \epsilon \cdot c_{\text{align}}.
\end{equation}

\end{theorem}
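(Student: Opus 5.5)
The argument is a direct linear-algebra computation that uses linearity of expectation to split the logit change into one contribution from the key position $t^*$ and contributions from all other positions. By the setup of \cref{sec:theory}, the accumulated update is $\Delta \mathbf{W}_{\mathrm{down}} = \lambda_{\text{lr}} \sum_{t} \mathbf{V}_t \mathbf{Z}_t^\top$, with the sum running over prior chunks, so for every vocabulary item $w$
\begin{equation*}
\Delta \boldsymbol{\ell}_n[w] = \lambda_{\text{lr}} \sum_{t} \mathbf{E}_w^\top \mathbf{V}_t \,(\mathbf{Z}_t^\top \mathbf{Z}_n).
\end{equation*}
Taking expectations, I split the sum into the $t = t^*$ term and the $t \neq t^*$ terms. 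For $t\neq t^*$, the second half of the Key-Query Alignment assumption gives $\mathbb{E}[(\mathbf{V}_t\mathbf{Z}_t^\top)\mathbf{Z}_n]=\mathbf{0}$. Left-multiplying by the fixed vector $\mathbf{E}_w^\top$ shows that all these terms vanish in expectation, for either choice of target. This leaves
\begin{equation*}
\mathbb{E}[\Delta\boldsymbol{\ell}_n[w]] = \lambda_{\text{lr}}\, \mathbb{E}\big[\mathbf{E}_w^\top \mathbf{V}_{t^*}\, \mathbf{Z}_{t^*}^\top \mathbf{Z}_n\big].
\end{equation*}

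Next I substitute the two targets. Under the LM-aligned target, $\mathbf{V}_{t^*} = \mathbf{E}_{x_{t^*+1}} = \mathbf{E}_{v^*}$; under the reconstruction target, $\mathbf{V}_{t^*}=\mathbf{E}_{k^*}$. The embeddings and the token identities $k^*,v^*$ are fixed by the setup, so the scalar $\mathbf{E}_w^\top\mathbf{V}_{t^*}$ is deterministic and can be pulled out of the expectation:
\begin{equation*}
\mathbb{E}[\Delta\boldsymbol{\ell}_n[w]] = \lambda_{\text{lr}}\,(\mathbf{E}_w^\top \mathbf{V}_{t^*})\, c_{\text{align}}.
\end{equation*}
The three claims then follow from Approximate Orthogonality.
\begin{itemize}
\item For the LM target with $w=v^*$, the prefactor is $\|\mathbf{E}_{v^*}\|^2\ge c_{\text{norm}}^2$. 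Since $c_{\text{align}}>0$ and $\lambda_{\text{lr}}>0$, this gives \eqref{eq:app-ntp_correct-logit}.
\item For the LM target with $w\ne v^*$, we have $|\mathbf{E}_w^\top\mathbf{E}_{v^*}|\le\epsilon$, which gives \eqref{eq:app-ntp-other-logits}.
\item For the reconstruction target at $w=v^*$, we need $|\mathbf{E}_{v^*}^\top\mathbf{E}_{k^*}|\le\epsilon$, which gives \eqref{eq:app-recon-flat}.
\end{itemize}

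The main obstacle is the reconstruction bound: it requires $k^*\neq v^*$, so that the two embeddings are distinct and approximate orthogonality applies. I would state this as implicit in the induction-head setup, since the key and value are different tokens; if $k^*=v^*$ the two targets coincide at $t^*$ and no separation can hold. A second subtlety is treating $\mathbf{E}_w^\top\mathbf{V}_{t^*}$ as deterministic. If the token identities are random, I would instead condition on $(k^*,v^*)$, apply the bounds conditionally, and then average. This works because the orthogonality and norm bounds hold uniformly over all token pairs, provided the alignment assumption is read conditionally as well.

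A final notational point is that the theorem writes the step size as $\lambda_{\text{lr}}$ while the setup writes $\eta$. I would identify the two, $\lambda_{\text{lr}}=\eta$, and note that the sign convention from the loss $-\langle\cdot,\cdot\rangle_F$ in \eqref{eq:final_update_rule} makes the update additive. This is the sign needed for the correct logit to increase rather than decrease.
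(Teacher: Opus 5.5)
Your proposal is correct and follows essentially the same route as the paper's proof. You expand $\Delta\boldsymbol{\ell}_n[w]=\lambda_{\text{lr}}\sum_t(\mathbf{E}_w^\top\mathbf{V}_t)(\mathbf{Z}_t^\top\mathbf{Z}_n)$, discard every $t\neq t^*$ term via the second half of the Key-Query Alignment assumption, and treat the target at $t^*$ as fixed so the expectation becomes $\lambda_{\text{lr}}(\mathbf{E}_w^\top\mathbf{V}_{t^*})\,c_{\text{align}}$. You then conclude with Approximate Orthogonality, using $k^*\neq v^*$ for the reconstruction case exactly as the paper does. The only difference is cosmetic: you apply the $t\neq t^*$ assumption before pulling any target out of the expectation, which is slightly cleaner than the paper's step of first treating every $\mathbf{v}_t$ as deterministic.
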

\endgroup

\begin{proof}
We begin from the setup defined in \cref{sec:theory}. The change to the fast weights $\mathbf{W}_{\mathrm{down}}$ from all prior context tokens is given by $\Delta \mathbf{W}_{\mathrm{down}} = \lambda_{\text{lr}} \sum_{t \in \text{prior}} \mathbf{v}_t \mathbf{z}_t^\top$, where we use $\lambda_{\text{lr}}$ to denote the learning rate $\eta$ for consistency with the theorem statement. The resulting change in the logit for an arbitrary token $w$ at the query position $n$ is:
\begin{equation}
\Delta \boldsymbol{\ell}_{n}[w] = \mathbf{e}_w^\top (\Delta \mathbf{W}_{\mathrm{down}}) \mathbf{z}_{n} = \lambda_{\text{lr}} \sum_{t \in \text{prior}} \mathbf{e}_w^\top (\mathbf{v}_t \mathbf{z}_t^\top) \mathbf{z}_n.
\end{equation}
Since $\mathbf{e}_w^\top \mathbf{v}_t$ and $\mathbf{z}_t^\top \mathbf{z}_n$ are scalars, we can rearrange the terms to get:
\begin{equation}
\Delta \boldsymbol{\ell}_{n}[w] = \lambda_{\text{lr}} \sum_{t \in \text{prior}} (\mathbf{e}_w^\top \mathbf{v}_t) (\mathbf{z}_t^\top \mathbf{z}_n).
\end{equation}
To analyze the expected change, we take the expectation over the representations. Applying the linearity of expectation, we have:
\begin{equation}
\mathbb{E}[\Delta \boldsymbol{\ell}_{n}[w]] = \lambda_{\text{lr}} \sum_{t \in \text{prior}} \mathbb{E}[(\mathbf{e}_w^\top \mathbf{v}_t) (\mathbf{z}_t^\top \mathbf{z}_n)].
\end{equation}
Per our setup, the target vectors $\mathbf{v}_t$ (e.g., $\mathbf{e}_{x_t}$ or $\mathbf{e}_{x_{t+1}}$) are treated as determined. Thus, we can factor them out of the expectation:
\begin{equation}
\mathbb{E}[\Delta \boldsymbol{\ell}_{n}[w]] = \lambda_{\text{lr}} \sum_{t \in \text{prior}} \mathbf{e}_w^\top \mathbb{E}[\mathbf{v}_t \mathbf{z}_t^\top \mathbf{z}_n].
\end{equation}
Now, we invoke Assumption 2. It states that for the unique key position $t^*$, we have $\mathbb{E}[\mathbf{z}_{t^*}^\top \mathbf{z}_n] = c_{\text{align}}$, and for all other prior positions $t \neq t^*$, the updates provide no information gain, which implies $\mathbb{E}[\mathbf{v}_t \mathbf{z}_t^\top \mathbf{z}_n]=\mathbf{0}$. This simplifies the summation to a single term corresponding to the key-value pair $(k^*, v^*)$ at position $t^*$:
\begin{equation}
\label{eq:app-simplified-form}
\mathbb{E}\left[\Delta \boldsymbol{\ell}_{n}[w]\right] = \lambda_{\text{lr}} \cdot \mathbb{E}\left[(\mathbf{e}_w^\top \mathbf{v}_{t^*}) \cdot (\mathbf{z}_{t^*}^\top \mathbf{z}_n)\right].
\end{equation}
We now analyze this simplified expression for the two target choices.

\vspace{1em}
\textbf{Case 1: NTP-Aligned Target ($\mathbf{v}_{t^*} = \mathbf{e}_{x_{t^*+1}}$)}
\vspace{0.5em}

First, we consider the logit of the correct token, $w = v^*$. Substituting the target and the token into \cref{eq:app-simplified-form} yields:
\begin{equation}
\mathbb{E}\left[\Delta\boldsymbol{\ell}_{n}[v^*]\right] = \lambda_{\text{lr}} \cdot \mathbb{E}\left[(\mathbf{e}_{v^*}^\top \mathbf{e}_{v^*}) \cdot (\mathbf{z}_{t^*}^\top \mathbf{z}_n)\right].
\end{equation}
By \textbf{Assumption 1}, token embeddings have a non-trivial magnitude, $\|\mathbf{e}_{v^*}\|^2 \geq c_{\text{norm}}^2$. This gives us the lower bound in \cref{eq:app-ntp_correct-logit}:
\begin{equation}
\mathbb{E}\left[\Delta\boldsymbol{\ell}_{n}[v^*]\right] \geq \lambda_{\text{lr}} \cdot c_{\text{align}} \cdot c_{\text{norm}}^2.
\end{equation}
Next, for any incorrect token $w \neq v^*$, the expected change is $\mathbb{E}\left[\Delta\boldsymbol{\ell}_{n}[w]\right] = \mathbb{E}\left[(\mathbf{e}_{w}^\top \mathbf{e}_{v^*}) \cdot (\mathbf{z}_{t^*}^\top \mathbf{z}_n)\right]$. Taking the absolute value and applying Assumption 1, which states that distinct embeddings are nearly orthogonal ($|\mathbf{e}_w^\top \mathbf{e}_{v^*}| \leq \epsilon$), we obtain the bound in \cref{eq:app-ntp-other-logits}:
\begin{equation}
\left|\mathbb{E}\left[\Delta\boldsymbol{\ell}_{n}[w]\right]\right| \leq \lambda_{\text{lr}} \cdot c_{\text{align}} \cdot \epsilon.
\end{equation}

\vspace{1em}
\textbf{Case 2: Reconstruction Target ($\mathbf{v}_{t^*} = \mathbf{e}_{x_{t^*}}$)}
\vspace{0.5em}

Here, we analyze the effect on the correct logit $w=v^*$. The expected change is:
\begin{equation}
\mathbb{E}\left[\Delta\boldsymbol{\ell}_{n}[v^*]\right] = \mathbb{E}\left[(\mathbf{e}_{k^*}^\top \mathbf{e}_{v^*}) \cdot (\mathbf{z}_{t^*}^\top \mathbf{z}_n)\right].
\end{equation}
In an induction task, the key $k^*$ is distinct from the value $v^*$. We again invoke Assumption 1 for these distinct tokens. Taking the absolute value gives the bound in \cref{eq:app-recon-flat}:
\begin{equation}
\left|\mathbb{E}\left[\Delta\boldsymbol{\ell}_{n}[v^*]\right]\right| \leq \lambda_{\text{lr}} \cdot c_{\text{align}} \cdot \epsilon.
\end{equation}
This confirms that the reconstruction target has a negligible expected effect on the logit of the correct answer $v^*$.

\vspace{1em}
The results from these two cases establish the claims in \cref{thm:logit}, providing a clear theoretical basis for the superiority of the NTP-aligned objective in the context of in-context learning. This completes the proof.
\end{proof}

\section{Context Parallel Algorithm for In-Place TTT}
\label{app:cp}
For more clarity, we list the pseudocode of the context parallel implementation of our In-Place TTT here in \cref{alg:inplaceTTT-cp}.
\begin{algorithm}[h]
\caption{\textbf{In-Place TTT with Context Parallelism (Single Layer)}}
\label{alg:inplaceTTT-cp}
\begin{algorithmic}[1]
\Require Pre-trained weights $\theta$ (incl. $W_{\mathrm{up}}, W_{\mathrm{gate}}, W_{\mathrm{down}}^{(0)}$); Conv1D kernel $K$; projection $W_{\text{target}}$; learning rate $\eta$.
\State \textbf{Input:} Sequence chunks $\{X^{(i)}\}_{i=1}^{T}$. \Comment{Sequence partitioned for Context Parallelism (CP).}
\vspace{3pt}
\ForAll{$i\in\{1,\dots,T\}$ \textbf{in parallel}} \Comment{Step 1: Compute update deltas.}
    \State $H_i \leftarrow \mathrm{AttentionBlock}(X^{(i)};\theta)$ \Comment{Standard attention, no changes required.}
    \State $U_i,G_i \leftarrow H_i W_{\mathrm{up}}^\top, H_i W_{\mathrm{gate}}^\top$
    \State $Z_i \leftarrow \phi(G_i)\odot U_i$
    \State $V_i \leftarrow \mathrm{Conv1D}_K(X_{0}^{(i)}) W_{\text{target}}$ \Comment{Compute NTP-aligned target with causal padding.}
    \State $\Delta W_i \leftarrow V_i^\top Z_i$ \Comment{Compute gradient for the fast weight update.}
\EndFor
\vspace{3pt}
\State $\{S_i\}_{i=1}^T \leftarrow \textsc{Cumsum}( \{\Delta W_i\}_{i=1}^T )$ \Comment{Step 2: Aggregate deltas associatively.}
\vspace{3pt}
\ForAll{$i\in\{1,\dots,T\}$ \textbf{in parallel}} \Comment{Step 3: Apply updates and compute outputs.}
    \State $W_{\mathrm{down}}^{(i-1)} \leftarrow W_{\mathrm{down}}^{(0)} + \eta S_i$ \Comment{Effective weight for chunk $i$ uses updates from chunks $<i$.}
    \State $O_i \leftarrow Z_i (W_{\mathrm{down}}^{(i-1)})^\top$
\EndFor
\vspace{3pt}
\State \textbf{At document boundaries:} Reset fast weights to $W_{\mathrm{down}}^{(0)}$.
\end{algorithmic}
\end{algorithm}

\section{Experiment Details}
\label{app:exp}

This appendix provides all details of the experimental settings, datasets, model configurations, and training hyperparameters used for the results presented in Section~\ref{sec:exp}. The following subsections detail the setups for our three primary sets of experiments: the \textbf{continual pre-training} of Qwen3-4B-Base, the \textbf{from-scratch pre-training} of models at multiple scales (500M, 1.5B, and 4B), and the \textbf{targeted ablation studies}. Our goal is to provide sufficient detail to ensure the reproducibility of our findings.

\subsection{Details of Datasets}
\label{app:dataset}

For the large scale pretraining, continual pretraining, and ablation study, we use the dataset collected by ourselves, we give the details of these datasets as follows.

\textbf{From Scratch Pretraining Dataset.} The pretraining dataset mainly includes general English and Chinese text, along with high knowledge- or reasoning-density data, code, mathematics data, and multilingual text, forming a balanced mixture of linguistic diversity, knowledge and reasoning-rich content, programming material, and mathematical reasoning. 

\textbf{Continual Pretraining Dataset.} The continual pretraining dataset is designed to enhance long-context modeling: its short-document portion follows a distribution similar to Pretrain Data, while the long-document portion combines natural data such as books and repository-level code with synthetic data including retrieval-augmented and long-context-QA style constructions, ensuring both consistency with pretraining and coverage of challenging long-context scenarios. The data is organized into subsets with maximum sequence lengths of 32k and 128k for our two-stage training curriculum.

\subsection{Details of Training and Evaluation}
\label{app:train}

\textbf{Training Details.} All models are trained on Nvidia H800 GPUs, with the detailed training hyperparameters listed in Tables~\ref{tab:small_scale_hparams} through~\ref{tab:continual_hparams}.

\begin{table}[h!]
\centering
\caption{Training hyperparameters for 500M and 1.5B models.}
\label{tab:small_scale_hparams}
\begin{tabular}{lcc}
\toprule
\textbf{Hyperparameter} & \textbf{500M Model} & \textbf{1.5B Model} \\
\midrule
Optimizer & AdamW & AdamW \\
Learning Rate & 5e-4 & 3e-4 \\
Batch Size & 2M tokens & 4M tokens \\
Weight Decay & 0.1 & 0.1 \\
Gradient Clipping & 1.0 & 1.0 \\
Warmup Steps & 1024 & 1024 \\
\midrule
Sequence Length & 32,768 & 32,768 \\
Tokens Trained & 20B & 60B \\
Sliding Window Size & 2,048 & 4,096 \\
\bottomrule
\end{tabular}
\end{table}

\begin{table}[h!]
\centering
\caption{Training hyperparameters for 1.7B models and 4B models pretraining}
\label{tab:large_scale_hparams}
\begin{tabular}{lc}
\toprule
\textbf{Hyperparameter} & \textbf{value} \\
\midrule
Optimizer & AdamW  \\
Learning Rate & 3e-4  \\
Batch Size & 8M tokens  \\
Weight Decay & 0.1 \\
Gradient Clipping & 1.0  \\
Warm-up Tokens\footnote{We use both batch size warm-up and learning rate warm-up here.} & 1.6B \\
Sequence Length & 8,192  \\
Tokens Trained & 120B  \\
\bottomrule
\end{tabular}
\end{table}

\begin{table}[h!]
\centering
\caption{Hyperparameters for two-stage continual pre-training.}
\label{tab:continual_hparams}
\begin{tabular}{lcc}
\toprule
\textbf{Hyperparameter} & \textbf{Stage 1 (32k Context)} & \textbf{Stage 2 (128k Context)} \\
\midrule
Base Model & {Qwen3-4B-Base}& {Qwen3-4B-Base} \\
Optimizer & {AdamW} & AdamW \\
Learning Rate & {5e-6} & {5e-6} \\
Weight Decay & {0.1} & {0.1} \\
\midrule
Sequence Length & 32,768 & 131,072 \\
Tokens Trained & $\sim$20B & $\sim$15B \\
RoPE Extension & None & {YaRN} \\
Conv Size  & 5 & 5 \\
\bottomrule
\end{tabular}
\end{table}

\begin{table}[h!]
\centering
\caption{Hyperparameters for continual pre-training of LLaMA-3.1-8B and Qwen3-14B-Base.}
\label{tab:continual_hparams_ext}
\begin{tabular}{lcc}
\toprule
\textbf{Hyperparameter} & \textbf{LLaMA-3.1-8B} & \textbf{Qwen3-14B-Base} \\
\midrule
Optimizer & AdamW & AdamW \\
Learning Rate & 5e-6 & 5e-6 \\
Weight Decay & 0.1 & 0.1 \\
\midrule
Sequence Length & 32,768 & 32,768 \\
Tokens Trained & $\sim$20B & $\sim$20B \\
RoPE Extension & None & None \\
Conv Size  & 5 & 5 \\
\bottomrule
\end{tabular}
\end{table}

\textbf{Evaluation Details.} We employ the evaluation framework lm-evaluation-harness~\citep{eval-harness} to evaluate the models on the common sense reasoning benchmarks and employ the evaluation framework opencompass~\citep{2023opencompass} to evaluate the models on the long context benchmarks. All evaluation are conducted on Nvidia H800 GPUs.

In the evaluation of our continual pretrained Qwen3-4B model, we apply a clipping mechanism at inference time to ensure stable fast-weight updates. Specifically, if the Frobenius norm of an update delta $\|\Delta \mathbf{W}_{\text{down}}^{(i)}\|_F$ exceeds a predefined threshold $\tau$, the delta matrix is rescaled to have norm $\tau$ before being applied, i.e., $\Delta \mathbf{W}_{\text{down}}^{(i)} \leftarrow \tau \cdot \Delta \mathbf{W}_{\text{down}}^{(i)} / \|\Delta \mathbf{W}_{\text{down}}^{(i)}\|_F$. This prevents the accumulated updates from growing unboundedly as the sequence length increases, thereby maintaining numerical stability for long-context inference. For all reported evaluations of the Qwen3-4B model, this threshold was set to $\tau = 1\text{e-5}$.

To evaluate the efficiency of our In-Place TTT, we evaluate the prefill throughput and peak memory for sequence length ranging from 8k to 128k. We run the inference for our continual pretrained checkpoints based on Qwen3-4B-Base model. For the setting of sliding window, we set change the attention mechanism of these pretrained checkpoints to sliding window of 1024 tokens manually. We run the inference on Nvidia H800 GPUs with batch size of 1.

\subsection{Details of Model Configuration}
\label{app:model}

This section details the architectural configurations of the models used in our experiments. All models are decoder-only Transformer architectures featuring standard components, including SwiGLU activations and Rotary Position Embeddings (RoPE)~\citep{su2023roformer}. The key architectural parameters for all models trained from scratch are summarized in Table~\ref{tab:model_configs}.

\begin{table}[h!]
\centering
\caption{Model architectural configurations for 500M and 1.5B Model.}
\label{tab:model_configs}
\begin{tabular}{lcc}
\toprule
\textbf{Parameter} & \textbf{500M} & \textbf{1.5B}  \\
\midrule
Parameters (Approx.) & 500M & 1.5B  \\
Hidden Size ($d_{\text{model}}$) & 1024 & 2048  \\
Num Layers & 24 & 24  \\
Num Attention Heads & 8 & 16  \\
FFN Hidden Size ($d_{\text{ff}}$) & 3072 & 6144  \\
Window Size & 2048 & 4096 \\
Vocabulary Size & {32,000} & 32,000 \\
Rope Base & 1e6 & 1e6 \\
\bottomrule
\end{tabular}
\end{table}

The models trained from scratch employ different attention mechanisms based on their experimental purpose. The 500M, 1.5B utilize sliding-window attention and we list the model configuration in \cref{tab:model_configs}. The 4B-scale experiments and ablation study evaluate two variants: one with full attention and another with sliding-window attention. The backbone architectures for the 4B models and 1.7B models are identical to the Qwen3-4B-Base model and the Qwen3-1.7B-Base model.

For the continual pre-training experiments described in \cref{sec:exp_continual_pretrain}, we start directly from publicly available pre-trained models---Qwen3-4B-Base~\citep{yang2025qwen3}, LLaMA-3.1-8B~\citep{grattafiori2024llama}, and Qwen3-14B-Base~\citep{yang2025qwen3}---inheriting their architectures without modification. In experiments featuring our method, the In-Place TTT module is integrated into the MLP blocks and applied to every sixth layer. For the ablation studies, this frequency is varied as described in the main paper. The training hyperparameters for Qwen3-4B-Base are listed in \cref{tab:continual_hparams}, and those for LLaMA-3.1-8B and Qwen3-14B-Base are listed in \cref{tab:continual_hparams_ext}.

\textbf{Initialization of In-Place TTT Modules.}
When integrating In-Place TTT into a pre-trained model for continual training, careful initialization is essential to preserve the model's pre-trained capabilities at the start of training. Specifically, we initialize the newly introduced TTT components---the Conv1D operator and the projection matrix $\mathbf{W}_{\text{target}}$---such that the TTT update $\Delta \mathbf{W}_{\text{down}}$ is negligible at initialization, ensuring the model begins from its original pre-trained behavior. Concretely, the depthwise Conv1D (kernel size 5, causal padding, no bias) is zero-initialized, so the target $\hat{\mathbf{V}}$ is zero at initialization. The projection matrix $\mathbf{W}_{\text{target}} \in \mathbb{R}^{d_{\text{model}} \times d_{\text{model}}}$ is initialized as a sparse diagonal matrix, where all off-diagonal entries are zero and the diagonal entries are drawn from $\mathcal{N}(0, \sigma^2)$ with $\sigma$ being the model's standard initializer range. This near-zero initialization of both components guarantees that the initial fast-weight update $\eta \hat{\mathbf{V}}_{[i]}^\top \mathbf{Z}_{[i]} \approx \mathbf{0}$, and consequently the effective $\mathbf{W}_{\text{down}}$ remains identical to its pre-trained value. As training progresses, the Conv1D and projection gradually learn to produce meaningful NTP-aligned targets, allowing the TTT mechanism to smoothly emerge without disrupting the pre-trained model.

\section{Usage of LLMs}
During the preparation of this manuscript, LLMs was used to check grammar and improve readability. All authors have reviewed, edited, and take full responsibility for the paper's final version.

\end{document}